\definecolor{Cluster0color}{RGB}{237,225,229}
\definecolor{Cluster1color}{RGB}{118,185,223}
\definecolor{Cluster2color}{RGB}{17,86,70}
\DeclareMathOperator*{\argmin}{arg\,min}
\DeclareMathOperator*{\argmax}{arg\,max}
\newcommand{\smallpar}[1]{\medskip\noindent\textbf{#1.}\hspace{.5ex}}
\newtheorem{theorem}{Theorem}
\newtheorem{definition}{Definition}
\newtheorem{corollary}{Corollary}
\newtheorem{claim}{Claim}
\newtheorem{proposition}{Proposition}
\newcommand{\R}{\mathbb{R}}
\newcommand{\E}{\mathbb{E}}
\newcommand{\norm}[1]{\left\lVert#1\right\rVert}
\newcommand\inner[2]{\langle #1, #2 \rangle}
\newcommand{\vectx}{\mathbf{x}}
\newcommand{\vectv}{\mathbf{v}}
\newcommand{\vectmu}{\boldsymbol{\mu}}
\newcommand{\cost}{\mathrm{cost}}
\newcommand{\surcost}{\widetilde{\cost}}
\newcommand{\dist}{\mathsf{dist}}
\newcommand{\cX}{\mathcal{X}}
\newcommand{\cD}{\mathcal{D}}
\newcommand{\centersSet}{\mathcal{M}}
\newcommand{\sklearn}{\texttt{sklearn}}
\newcommand{\cart}{\texttt{CART}}
\newcommand{\imm}{\texttt{IMM}}
\newcommand{\exkmc}{\texttt{ExKMC}}
\newcommand{\cltree}{\texttt{CLTree}}
\newcommand{\cubt}{\texttt{CUBT}}
\newcommand{\vectp}{\mathbf{p}}
    \newcolumntype{R}{>{\raggedright\arraybackslash}X}
\definecolor{bluish-green}{HTML}{006E53}
\newcommand{\xvar}[1]{\textsf{#1}}
\newcommand{\xfunc}[1]{\texttt{#1}}
\title{\texttt{ExKMC}: Expanding Explainable $k$-Means Clustering}
\author{
  Nave Frost\quad\quad\\
  Tel Aviv University\quad\quad\\
  \texttt{\quad navefrost@mail.tau.ac.il\quad}
  \and
    Michal Moshkovitz\\
    University of California, San Diego\\
    \texttt{mmoshkovitz@eng.ucsd.edu}
  \and
    Cyrus Rashtchian\\
    University of California, San Diego\\
  \texttt{crashtchian@eng.ucsd.edu}
}
\date{}
\begin{document}

\maketitle

\begin{abstract}
Despite the popularity of explainable AI, there is limited work on effective methods for unsupervised learning. We study algorithms for $k$-means clustering, focusing on a trade-off between explainability and accuracy. Following prior work, we use a small decision tree to partition a dataset into $k$ clusters. This enables us to explain each cluster assignment by a short sequence of single-feature thresholds. While larger trees produce more accurate clusterings, they also require more complex explanations. To allow flexibility, we develop a new explainable $k$-means clustering algorithm, \exkmc{}, that takes an additional parameter $k' \geq k$ and outputs a decision tree with $k'$ leaves. We use a new surrogate cost to efficiently expand the tree and to label the leaves with one of $k$ clusters. We prove that as $k'$ increases, the surrogate cost is non-increasing, and hence, we trade explainability for accuracy. 
Empirically, we validate that \exkmc{} produces a low cost clustering, outperforming both standard decision tree methods and other algorithms for explainable clustering. Implementation of \exkmc{} available at \url{https://github.com/navefr/ExKMC}.
\end{abstract}

\section{Introduction}

The bulk of research on explainable machine learning studies how to interpret the decisions of supervised learning methods, largely focusing on feature importance in black-box models~\cite{arrieta2020explainable, deutch2019constraints, lipton2018mythos,lundberg2017unified, molnar2019, murdoch2019interpretable, ribeiro2016should,  rudin2019stop}. To complement these efforts, we study explainable algorithms for clustering, a canonical example of unsupervised learning.  Most clustering algorithms operate iteratively, using global properties of the data to converge to a low-cost solution.
For center-based clustering, the best explanation for a cluster assignment may simply be that a data point is closer to some center than any others. While this type of explanation provides some insight into the resulting clusters, it obscures the impact of individual features, and the cluster assignments often depend on the dataset in a complicated way.

Recent work on explainable clustering goes one step further by enforcing that the clustering be derived from a binary threshold tree~\cite{bertsimas2018interpretable,chen2016interpretable,dasgupta2020explainable,fraiman2013interpretable,ghattas2017clustering,liu2005clustering}. Each node is associated with a feature-threshold pair that recursively splits the dataset, and labels on the leaves correspond to clusters. Any cluster assignment can be explained by a small number of thresholds, each depending on a single feature. For large, high-dimensional datasets, this provides more information than typical clustering methods.

To make our study concrete, we focus on the $k$-means objective. The goal is to find $k$ centers that approximately minimize the sum of the squared distances between $n$ data points in $\R^d$ and their nearest center~\cite{aggarwal09, aloise2009np, arthur2007k,dasgupta2008hardness, kanungo02, ostrovsky2013effectiveness}. In this context, Dasgupta, Frost, Moshkovitz, and Rashtchian have studied the use of a small threshold tree to specify the cluster assignments, exhibiting the first explainable $k$-means clustering algorithm with provable guarantees~\cite{dasgupta2020explainable}. They propose the Iterative Mistake Minimization (\imm{}) algorithm and prove that it achieves a worst-case $O(k^2)$ approximation to the optimal $k$-means clustering cost. However, the \imm{} algorithm and analysis are limited to trees with exactly $k$ leaves (the same as the number of clusters). They also prove a lower bound showing that an $\Omega(\log k)$ approximation is the best possible when restricted to trees with at most $k$ leaves. 

Our goal is to improve upon two shortcomings of previous work by (i) providing an experimental evaluation of their algorithms and (ii) exploring the impact of using more leaves in the threshold tree. We posit that on real datasets it should be possible to find a nearly-optimal clustering of the dataset. In other words, the existing worst-case analysis may be very pessimistic. Furthermore, we hypothesize that increasing the tree size should lead to monotonically decreasing the clustering cost. 

\begin{figure}[tp]
    \centering
    \begin{subfigure}{.25\linewidth}
    \centering
    \includegraphics[width=\linewidth]{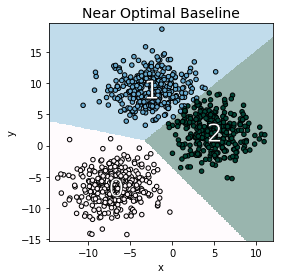}
    \caption{$k$-means++ (reference)}
    \label{fig:boundary_optimal}
    \end{subfigure}%
    \begin{subfigure}{.25\linewidth}
    \centering
    \includegraphics[width=\linewidth]{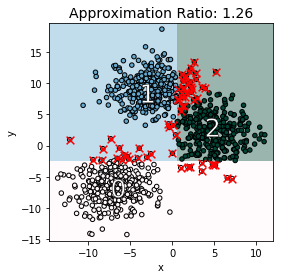}
    \caption{$k$ leaves (\imm{})}
    \label{fig:boundary_k_leaves}
    \end{subfigure}%
    \begin{subfigure}{.25\linewidth}
    \centering
    \includegraphics[width=\linewidth]{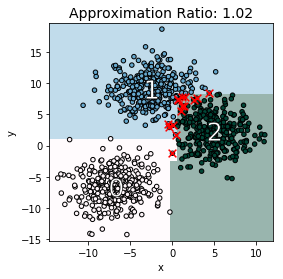}
    \caption{$2k$ leaves (\exkmc{})}
    \label{fig:boundary_2k_leaves}
    \end{subfigure}%
    \begin{subfigure}{.25\linewidth}
    \centering
    \includegraphics[width=\linewidth]{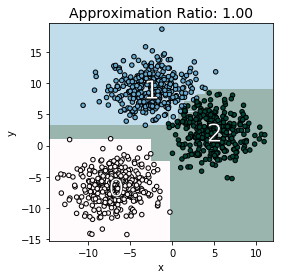}
    \caption{$n$ leaves}
    \label{fig:boundary_n_leaves}
    \end{subfigure}\\
    \begin{subfigure}{.25\linewidth}
    \centering
        \begin{tikzpicture}
            [inner/.style={shape=rectangle, rounded corners, draw, align=center, top color=white, bottom color=gray!40, scale=.85},
            leaf/.style={shape=rectangle, rounded corners, draw, align=center, scale=1},
            level 1/.style={sibling distance=4mm},
            level 2/.style={sibling distance=3mm},
            level 3/.style={sibling distance=2mm},
            level 4/.style={sibling distance=2mm},
            level distance=8mm]
            \Tree
            [.\node[inner]{$y \leq -2.5$};
                \node[leaf, top color=Cluster0color!60, bottom color=Cluster0color!80]{\textbf{0}};
                [.\node[inner]{$x \leq 0.5$};
                    \node[leaf, top color=Cluster1color!60, bottom color=Cluster1color!80]{\textbf{1}};
                    \node[leaf, top color=Cluster2color!60, bottom color=Cluster2color!80]{\textbf{2}};
                ]
            ]
        \end{tikzpicture}
    \caption{\imm{} tree}
    \label{fig:tree_k_leaves}
    \end{subfigure}%
    \begin{subfigure}{.25\linewidth}
    \centering
        \begin{tikzpicture}
            [inner/.style={shape=rectangle, rounded corners, draw, align=center, top color=white, bottom color=gray!40, scale=.85},
            leaf/.style={shape=rectangle, rounded corners, draw, align=center, scale=1},
            level 1/.style={sibling distance=4mm},
            level 2/.style={sibling distance=3mm},
            level 3/.style={sibling distance=2mm},
            level 4/.style={sibling distance=2mm},
            level distance=8mm]
            \Tree
            [.\node[inner]{$y \leq -2.5$};
                [.\node[inner]{$x \leq -0.5$};
                    \node[leaf, top color=Cluster0color!60, bottom color=Cluster0color!80]{\textbf{0}};
                    \node[leaf, top color=Cluster2color!60, bottom color=Cluster2color!80]{\textbf{2}};
                ]
                [.\node[inner]{$x \leq 0.5$};
                    [.\node[inner]{$y \leq 1.0$};
                    \node[leaf, top color=Cluster0color!60, bottom color=Cluster0color!80]{\textbf{0}};
                    \node[leaf, top color=Cluster1color!60, bottom color=Cluster1color!80]{\textbf{1}};
                    ]
                    [.\node[inner]{$y \leq 8.0$};
                    \node[leaf, top color=Cluster2color!60, bottom color=Cluster2color!80]{\textbf{2}};
                    \node[leaf, top color=Cluster1color!60, bottom color=Cluster1color!80]{\textbf{1}};
                    ]
                ]
            ]
        \end{tikzpicture}
    \caption{\exkmc{} tree}
    \label{fig:tree_2k_leaves}
    \end{subfigure}%
    \caption{Tree size (explanation complexity) vs. $k$-means clustering quality.}
    \label{fig:boundary_complexity_vs_accuracy}
\end{figure}

As our main contribution, we propose a novel extension of the previous explainable $k$-means algorithm. 
Our method, \exkmc{}, takes as input two parameters $k, k'$ and a set $\mathcal{X} \subseteq \R^d$ with $|\mathcal{X}| = n$. It first builds a threshold tree with $k$ leaves using the \imm{} algorithm~\cite{dasgupta2020explainable}. Then, given a budget of $k' > k$ leaves, it greedily expands the tree to reduce the clustering cost. At each step, the clusters form a refinement of the previous clustering. By adding more thresholds, we gain more flexibility in the data partition, and we also allow multiple leaves to correspond to the same cluster (with $k$ clusters total).

To efficiently determine the assignment of leaves to clusters, we design and analyze a surrogate cost. 
Recall that the \imm{} algorithm first runs a standard $k$-means algorithm, producing a set of $k$ centers that are given as an additional input~\cite{dasgupta2020explainable}. As \exkmc{} expands the tree to $k' >k$ leaves, it minimizes the cost of the current clustering compared to these $k$ reference centers. We assign each leaf a label based on the best reference center, which determines the cluster assignments. The main idea is that by fixing the centers between steps, we can more efficiently determine the next feature-threshold pair to add. We prove that the surrogate cost is non-increasing throughout the execution as the number of leaves $k'$ grows. When $k' = n$, then the $k$-means cost matches that of the reference clustering.

Figure~\ref{fig:boundary_complexity_vs_accuracy} depicts the improvement from using more leaves. The left picture shows a near-optimal $3$-means clustering. Next, the \imm{} algorithm with $k = 3$ leaves leads to a large deviation from the reference clustering. Extending the tree to use $2k = 6$ leaves with \exkmc{} leads to a lower-cost result that better approximates the reference clustering. We form three clusters by subdividing the previous clusters and mapping multiple leaves to the same cluster. Finally, trees with an arbitrary number of leaves can perfectly fit the reference clustering. Figures~\ref{fig:tree_k_leaves} and~\ref{fig:tree_2k_leaves} contains the trees for~\ref{fig:boundary_k_leaves} and~\ref{fig:boundary_2k_leaves} respectively.

To test our new algorithm, we provide an extensive evaluation on many real and synthetic datasets. We show that \exkmc{} often has lower $k$-means cost than several baselines. We also find that prior \imm{} analysis is pessimistic because their algorithm achieves clustering cost within 5--30\% of the cost of standard $k$-means algorithms. Next, we explore the effect of using more than $k$ leaves. On eleven datasets, we show that threshold trees with between $2k$ and $4k$ leaves actually suffice to get within 1--2\% of the cost of a standard $k$-means implementation. The only outlier is CIFAR-10, where we conjecture that pixels are insufficient features to capture the clustering. Overall, we verify that it is possible to find an explainable clustering with high accuracy, while using only $O(k)$ leaves for $k$-means clustering.

\subsection{Related Work} 

We address the challenge of obtaining a low cost $k$-means clustering using a small decision tree. Our approach has roots in previous works on clustering with unsupervised decision trees~\cite{basak2005interpretable,bertsimas2018interpretable, chang2002new, dasgupta2020explainable, de1997using,  fraiman2013interpretable, liu2005clustering, yasami2010novel} and in prior literature on extending decision trees for tasks beyond classification~\cite{geurts2011learning,geurts2007inferring,jernite2017simultaneous,louppe2013understanding, schrynemackers2015classifying,pliakos2018global, ram2011density}.

Besides the \imm{} algorithm~\cite{dasgupta2020explainable}, prior explainable clustering algorithms optimize different objectives than the $k$-means cost, such as the Silhouette metric~\cite{bertsimas2018interpretable}, density measures~\cite{liu2005clustering}, or interpretability scores~\cite{saisubramanian2020balancing}.
Most similar to our approach, a localized version of the 1-means cost has been used for greedily splitting nodes when growing the tree~\cite{fraiman2013interpretable, ghattas2017clustering}. We compare \exkmc{} against two existing methods: \cubt{}~\cite{fraiman2013interpretable} and \cltree{}~\cite{liu2005clustering}. 
We also compare against KDTree~\cite{bentley1975multidimensional} and, after generating the cluster labels, the standard decision tree classification method CART~\cite{breiman1984classification}.

Clustering via trees is explainable by design. We contrast this with the indirect approach of clustering with a neural network and then explaining the network~\cite{kauffmann2019clustering}. A generalization of tree-based clustering has been studied using rectangle-based mixture models~\cite{chen2016interpretable, chen2018phdinterpretable, pelleg2001mixtures}. Their focus differs from ours as they consider including external information, such as prior knowledge, via a graphical model and performing inference with variational methods. Clustering after feature selection~\cite{boutsidis2009unsupervised, cohen2015dimensionality} or feature extraction~\cite{becchetti2019oblivious,boutsidis2014randomized, makarychev2019performance} reduces the number of features, but it does not lead to an explainable solution because it requires running a non-explainable $k$-means algorithm on the reduced feature space. 

\subsection{Our contributions}

We present a new algorithm, \exkmc{}, for explainable $k$-means clustering with the following properties:

\smallpar{Explainability-accuracy trade-off} We provide a simple method to expand a threshold tree with $k$ leaves into a tree with a specified number of leaves. At each step, we aim to better approximate a given reference clustering (such as from a standard $k$-means implementation). The key idea is to minimize a surrogate cost that is based on the reference centers instead of using the $k$-means cost directly. By doing so, we efficiently expand the tree while obtaining a good $k$-means clustering.

\smallpar{Convergence} We demonstrate empirically that \exkmc{} quickly converges to the reference clustering as the number of leaves increases. On many datasets, the cost ratio between our clustering and the reference clustering goes to 1.0 as the number of leaves goes from $k$ to $4k$, where $k$ is the number of labels for classification datasets. In theory, we prove that the surrogate cost is non-increasing throughout the execution of \exkmc{}, verifying that we trade explainability for clustering accuracy.

\smallpar{Low cost} On a dozen datasets, our algorithm often achieves a lower $k$-means cost for a given number of leaves compared to four baselines:  \cubt{}~\cite{fraiman2013interpretable}, \cltree{}~\cite{liu2005clustering}, KDTree~\cite{bentley1975multidimensional}, and \cart{}~\cite{loh2011classification}.

\smallpar{Speed} Using only standard optimizations (e.g., dynamic programming), \exkmc{} can cluster fairly large datasets (e.g., CIFAR-10 or covtype) in under 15 minutes using a single processor, making it a suitable alternative to standard $k$-means in data science pipelines. The main improvement comes from using the surrogate cost, instead of the $k$-means cost, to determine the cluster assignments. 

\section{Preliminaries}

We let $[n] = \{1,2,\ldots,n\}$.  For $k \geq 1$, a {\em $k$-clustering} refers to a partition of a dataset into $k$ clusters. 
Let $C^1,\ldots,C^k$ be a $k$-clustering of  $\mathcal{X} \subseteq \R^d$ with $|\mathcal{X}| =n$ and $\vectmu^j = \mathsf{mean}(C^j)$. The {\em $k$-means cost} is $\sum_{j=1}^k \sum_{\vectx \in C^j} \|\vectx - \vectmu^j\|_2^2$. 
It is NP-hard to find the optimal $k$-means  clustering~\cite{aloise2009np, dasgupta2008hardness} or a close approximation~\cite{awasthi2015hardness}.
Standard algorithms for $k$-means are global and iterative, leading to complicated clusters that depend on the data in a hard to explain way~\cite{aggarwal09, arthur2007k,kanungo02, ostrovsky2013effectiveness}.  

\smallpar{Explainable clustering}
Let $T$ be a binary threshold tree with $k' \geq k$ leaves, where each internal node contains a single feature $i \in [d]$ and threshold $\theta \in \R$. 
We also consider a labeling function $\ell:\mathsf{leaves}(T) \to [k]$ that maps the leaves of $T$ to clusters.  The pair $(T,\ell)$ induces a $k$-clustering of $\mathcal{X}$ as follows. First, $\mathcal{X}$ is partitioned via $T$ using the feature-threshold pairs on the root-to-leaf paths. Then, each point $\vectx \in \mathcal{X}$ is assigned to one of $k$ clusters according to how $\ell$ labels its leaf.  Geometrically, the clusters reside in cells bounded by axis-aligned cuts, where the number of cells equals the number of leaves.
This results in a $k$-clustering $\widehat{C}^1,\ldots, \widehat{C}^k$ with means $\widehat{\vectmu}^j = \mathrm{mean}(\widehat{C}^j)$, and we denote the $k$-means cost of the pair $(T,\ell)$ as
$
\cost(T,\ell) = \sum_{j=1}^k \sum_{\vectx \in \widehat{C}^j} \|\vectx - \widehat{\vectmu}^j \|_2^2. 
$

\smallpar{Iterative Mistake Minimization (IMM)}
Previous work has exhibited the \imm{} algorithm that produces a threshold tree with $k$ leaves~\cite{dasgupta2020explainable}. It first runs a standard $k$-means algorithm to find $k$ centers. Then, it iteratively finds the best feature-threshold pair to partition the data into two parts. At each step, the number of mistakes is minimized, where a mistake occurs if a data point is separated from its center. Each partition also enforces that at least one center ends up in both children, so that the tree terminates with exactly $k$ leaves. Each leaf contains one center at the end, and the clusters are assigned based this center. The \imm{} algorithm provides a $O(k^2)$ approximation to the optimal $k$-means cost, assuming that a constant-factor approximation algorithm generates the initial $k$ centers.

We use the \imm{} algorithm to build a tree with $k$ leaves as the initialization for our algorithm. Then, we smoothly trade explainability for accuracy by expanding the tree to have $k' > k$ leaves, for a user-specified parameter $k'$. More formally, we develop an algorithm to solve the following problem:

\smallpar{Problem Formulation} 
Given a dataset $\mathcal{X} \subseteq \R^d$ and parameters $k, k'$ with $k' \geq k$, the goal is to efficiently construct a binary threshold tree $T$ with $k'$ leaves and a function $\ell:\mathsf{leaves}(T) \to [k]$ such that $(T,\ell)$ induces a $k$-clustering of $\mathcal{X}$ with as small $k$-means cost as possible.

\section{Our Algorithm} 
We describe our explainable clustering algorithm, \exkmc{}, that efficiently finds a tree-based $k$-clustering of a dataset. 
Starting with a base tree (either empty or from an existing algorithm like \imm{}), \exkmc{} expands the tree by replacing a leaf node with two new children. In this way, it refines the clustering, while allowing the new children to be mapped to different clusters.  A key optimization is to use a new surrogate cost to determine both the best threshold cut and the labeling of the leaves. At the beginning, we run a standard $k$-means algorithm and generate $k$ reference centers. Then, the surrogate cost is the $k$-means cost if the centers were the reference centers. By fixing the centers, instead of changing them at every step, we determine the cluster label for each leaf independently (via the best reference center). For a parameter $k'$, our algorithm terminates when the tree has $k'$ leaves.
 
\subsection{Surrogate cost}
Our starting point is a set of $k$ reference centers $\vectmu^1,\ldots,{\vectmu^k}$, obtained from a standard $k$-means algorithm. This induces a clustering with low $k$-means cost.
While it is possible to calculate the actual $k$-means cost as we expand the tree, it is difficult and time-consuming to recalculate the distances to a dynamic set of centers. Instead, we fix the $k$ reference centers, and we define the surrogate cost as the sum of squared distances between points and their closest reference center.

\begin{definition}[Surrogate cost]
Given centers $\vectmu^1,\ldots, \vectmu^k$ and a threshold tree $T$ that defines the  clustering $(\widehat{C}^1,\ldots, \widehat{C}^{k'})$, the \emph{surrogate cost} is defined as 
$$
\surcost^{\vectmu^1,\ldots,\vectmu^k}(T) = \sum_{j=1}^{k'} 
\min_{i\in[k]}
\sum_{\vectx \in 
\widehat{C}^j}
 \norm{\vectx-\vectmu^i}_2^2.
$$
\end{definition}
The difference between the new surrogate cost and the $k$-means cost is that the centers are \emph{fixed}. In contrast, the optimal $k$-means centers are the means of the clusters, and therefore, they would change throughout the execution of the algorithm.  
Before we present our algorithm in detail, we mention that at each step the goal will be to expand the tree by splitting an existing leaf into two children. To see the benefit of the surrogate cost, consider choosing a split at each step that minimizes the actual $k$-means cost of the new clustering. This requires time $\Omega(dkn)$ for each possible split because we must iterate over the entire dataset to calculate the new cost. In Section~\ref{sec:speedup-math}, we provide the detailed argument showing that {\exkmc} only takes time $O(dkn_v)$ at each step, which is an improvement as the number of surviving points $n_v$ in a node often decreases rapidly as the tree grows.

The surrogate cost has several further benefits. The best reference center for a cluster is independent of the other cluster assignments. This independence makes the calculation of $\surcost$ more efficient, as there is no need to recalculate the entire cost if some of the points are added or removed from a cluster. The surrogate cost is also an upper bound on the $k$-means cost. Indeed, it is known that using the mean of a cluster as its center can only improve the $k$-means cost. In Section~\ref{sec:theory}, we provide theoretical guarantees, such as showing that our algorithm always converges to the reference clustering as the number of leaves increases. Finally, in Section~\ref{section:experiments}, we empirically show that minimizing the surrogate cost still leads to a low-cost clustering, nearly matching the reference cost.  We now utilize this idea to design a new algorithm for explainable clustering. 

Algorithm~\ref{algo:exkmc} describes the \exkmc{} algorithm, which uses subroutines in Algorithm~\ref{algo:sub}. It takes as input a value $k$, a dataset $\mathcal{X}$, and a number of leaves $k' \geq k$. The first step will be to generate $k$ reference centers $\vectmu^1,\ldots,\vectmu^k$ from a standard $k$-means implementation and to build a threshold tree $T$ with $k$ leaves (for evaluation, $T$ is the output of the \imm{} algorithm). For simplicity, we refer to these as inputs as well. \exkmc{} outputs a tree $T'$ and a labeling $\ell:\mathsf{leaves}(T') \to [k]$ that assigns leaves to clusters. Notably, the clustering induced by $(T',\ell)$ always refines the one from $T$. 

\begin{table}[!htb]
    \centering
    \begin{tabular}[t]{ll}
    \centering
    \begin{minipage}[t]{.5\textwidth}
    \begin{algorithm}[H]
    \SetKwFunction{EX-Means}{EX-Means}
    \SetKwFunction{ExpandTree}{ExpandTree}
    \SetKwInOut{Input}{Input}\SetKwInOut{Output}{Output}\SetKwInOut{Preprocess}{Preprocess}
    \Input{
    	$\mathcal{X}$ -- Set of vectors in $\mathbb{R}^d$\\
    	$\centersSet$ -- Set of $k$ reference centers\\
    	$T$ -- Base tree\\
    	$k'$ -- Number of leaves\\
      }
    \Output{
        Labeled tree with $k'$ leaves
    }
    
    \LinesNumbered
    \setcounter{AlgoLine}{0}
    \BlankLine
    
    $\xvar{splits} \leftarrow \xfunc{dict}()$\;
    $\xvar{gains} \leftarrow \xfunc{dict}()$\;
    \ForEach{$\xvar{leaf} \in T.leaves$}
    {\label{alg:init}
        $\xfunc{add\_gain}(\xvar{leaf}, \mathcal{X}, \centersSet, \xvar{splits}, \xvar{gains})$\;
    }
    \While {$\lvert T.leaves \rvert < k'$}
    {
        $\xvar{leaf} \leftarrow \argmax_{\xvar{leaf}} \xvar{gains}[\xvar{leaf}]$\;\label{alg:max_gain}
        $i, \theta \leftarrow \xvar{splits}[\xvar{leaf}]$\;
        $\vectmu^L,\vectmu^R\leftarrow\xfunc{find\_labels}(\cX,\centersSet,i,\theta)$\;
        $\xvar{leaf}.condition \leftarrow ``x_i \leq \theta"$\;
        $\xvar{leaf}.l \leftarrow \xfunc{new Leaf}(label=\vectmu^L)$\;
        $\xvar{leaf}.r \leftarrow \xfunc{new Leaf}(label=\vectmu^R)$\;
        $\xfunc{add\_gain}(\xvar{leaf}.l, \mathcal{X}, \centersSet, \xvar{splits}, \xvar{gains})$\;\label{alg:add_gain}
        $\xfunc{add\_gain}(\xvar{leaf}.r, \mathcal{X}, \centersSet, \xvar{splits}, \xvar{gains})$\;
        $\xfunc{delete}(\xvar{splits}[\xvar{leaf}], \xvar{gains}[\xvar{leaf}])$\;\label{alg:delete_gain}
    }
    \Return $T$\;
    
    \caption{\textsc{\exkmc{}: Expanding \newline Explainable $k$-Means Clustering}}
    \label{algo:exkmc}
    \end{algorithm}
    \end{minipage}
    &
    \begin{minipage}[t]{.5\textwidth}
    \begin{algorithm}[H]
    \SetKwProg{leafGain}{$\xfunc{add\_gain}(\mathsf{leaf},\mathcal{X}, \centersSet, \mathsf{splits}, \mathsf{gains})$:}{}{}
    \LinesNumbered
    \setcounter{AlgoLine}{0}
    \BlankLine
    \leafGain{}{
        $\mathcal{X}_l \leftarrow \{\vectx \in \mathcal{X} \mid \vectx \text{ path ends in } \xvar{leaf}\}$\;
        $i, \theta \leftarrow \argmin_{i,\theta} \xfunc{split\_cost}(\mathcal{X}_l, \centersSet, i, \theta)$\;\label{ln:k_dynamic}
        $\xvar{best\_cost} \leftarrow \xfunc{split\_cost}(\mathcal{X}_l, \centersSet, i, \theta)$\;
        $\xvar{splits}[\xvar{leaf}] \leftarrow (i, \theta)$\;
        $\xvar{gains}[\xvar{leaf}] \leftarrow \surcost_{\centersSet}(\mathcal{X}_l) - \xvar{best\_cost}$\;
        
    }
    
    \SetKwProg{splitCost}{$\xfunc{split\_cost}(\mathcal{X}, \centersSet, i, \theta)$:}{}{}
    \LinesNumbered
    \setcounter{AlgoLine}{0}
    \BlankLine
    \splitCost{}{
    $\mathcal{X}_L \leftarrow \{\vectx \in \mathcal{X} \mid x_i \leq \theta\}$\;
    $\mathcal{X}_R \leftarrow \{\vectx \in \mathcal{X} \mid x_i > \theta\}$\;
    \Return $\surcost_{\centersSet}(\mathcal{X}_L) + \surcost_{\centersSet}(\mathcal{X}_R)$\;
    }
    
    \SetKwProg{findLabels}{$\xfunc{find\_labels}(\mathcal{X}, \centersSet, i, \theta)$:}{}{}
    \LinesNumbered
    \setcounter{AlgoLine}{0}
    \BlankLine
    \findLabels{}{
    $\vectmu^L \leftarrow \argmin_{\vectmu \in \centersSet}\sum_{\vectx\in\cX:x_i\leq\theta}\norm{\vectx-\vectmu}_2^2$\;
    $\vectmu^R \leftarrow \argmin_{\vectmu \in \centersSet}\sum_{\vectx\in\cX:x_i>\theta}\norm{\vectx-\vectmu}_2^2$\;
    \Return $\vectmu^L,\vectmu^R$\;
    }
    
    \BlankLine
    \BlankLine
    \BlankLine
    \BlankLine    
    \BlankLine
    \BlankLine    
    \BlankLine
    \BlankLine
    \BlankLine
\BlankLine\BlankLine\BlankLine\BlankLine\BlankLine\BlankLine

    \caption{\textsc{Subroutines} \newline}
    \label{algo:sub}
    \end{algorithm}
    \end{minipage}\vspace{-2ex}
    \\
    \end{tabular}
\end{table}

\smallpar{Initialization} 
In line \ref{alg:init}, we first compute and store the gain of the $k$ leaves in $T$. The {\em gain} of a split is the difference between the cost with split and without. The details are in the subroutine \texttt{add\_gain}. It stores the best feature-threshold pair and cost improvement in \texttt{splits} and \texttt{gains}, respectively.

\smallpar{Growing the tree} 
We expand the tree by splitting the node with largest gain in Line~\ref{alg:max_gain}. We use best reference center from $\centersSet$ for each of its two children, using the subroutine \texttt{find\_labels}. In Lines~\ref{alg:add_gain}--\ref{alg:delete_gain}, we update the lists \texttt{splits} and \texttt{gains}. At the final step, we create a tree $T'$ with $k'$ labeled leaves, where the implicit labeling function $\ell$ maps a leaf to the lowest cost reference center.

\subsection{Speeding up \exkmc{}}
\label{sec:speedup-math}

The running time is dominated by finding the best split at each step. Fortunately, this can executed in $O(dkn_v + dn_v \log n_v)$ time where $n_v$ is the number of points surviving at node $v$.  The term $O(n_v \log n_v)$ comes from sorting the points in each coordinate. Then, we go over all splits $(i,\theta)$ and find the one that minimizes $\sum_{\vectx\in C^L}\norm{\vectx-\vectmu^L}_2^2 + \sum_{\vectx\in C^R}\norm{\vectx-\vectmu^R}_2^2$, where $C^L$ contains all points in $v$ where $x_i\leq\theta$ and $\vectmu^L$ is the best center for $C^L$ among the $k$ reference centers (cluster $C^R$ and center $\vectmu^R$ are defined similarly for points $x_i > \theta$). At first glance, it seems that the running time is $O(d^2kn_v^2)$, where $dn_v$ is the number of possible splits, $k$ is the possible value of $\vectmu^L$ (and $\vectmu^R$), and $O(dn_v)$ is the time to calculate the cost of each split and center value. To improve the running time we can rewrite the cost as $$\sum_{\vectx \in C^L \cup C^R}\norm{\vectx}_2^2-2\inner{\sum_{\vectx\in C^L}\vectx}{\vectmu^L}+|C^L|\norm{\vectmu^L}_2^2-2\inner{\sum_{\vectx\in C^R}\vectx}{\vectmu^R}+|C^R|\norm{\vectmu^R}_2^2.$$ Since we care about minimizing the last expression, we can ignore the term $\sum_{\vectx \in C^L \cup C^R}\norm{\vectx}_2^2$ as it is independent of the identity of the split. Using dynamic programming, we go over all splits and save $\sum_{\vectx\in C^L}\vectx$ and $\sum_{\vectx\in C^R}\vectx$. An update then only takes $O(dk)$ time, reducing the total time to $O(d^2kn_v + dn_v \log n_v).$ When the dimensionality $d$ is large, we can use an additional improvement by saving $\inner{\vectx}{\vectmu}$ for each $\vectx$ and $\vectmu$, reducing the total running time $O(dkn_v + dn_v \log n_v).$

\subsection{Theoretical guarantees}
\label{sec:theory}
Now that we have defined our {\exkmc} algorithm, we provide some guarantees on its performance (we defer all proofs for this section to Appendix~\ref{appendix:proofs}). We first prove two theorems about the surrogate cost, showing that it satisfies the two desirable properties of being non-increasing and also eventually converging to the reference clustering. Next, we verify that \exkmc{} has a worst-case approximation ratio of $O(k^2)$ compared to the optimal $k$-means clustering when using \imm{} to build the base tree. Finally, we provide a separation between \imm{} and \exkmc{} for a difficult dataset.

\begin{theorem}\label{thm:surrogate-non-increasing}
The surrogate cost, $\surcost$, is non-increasing throughout the execution of {\em \exkmc}. 
\end{theorem}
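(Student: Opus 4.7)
The plan is to reduce the theorem to a one-step claim: each iteration of \exkmc{} leaves the surrogate cost unchanged or decreases it. Monotonicity over the full execution then follows by induction on the number of leaves, starting from the base tree $T$.

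First I would fix a single iteration and isolate the change. In that iteration, the algorithm picks a leaf $v$ whose surviving data is some cluster $\widehat{C}^j$, chooses a feature-threshold pair $(i,\theta)$, and replaces $v$ by two children corresponding to $\widehat{C}^L = \{\vectx \in \widehat{C}^j : x_i \leq \theta\}$ and $\widehat{C}^R = \{\vectx \in \widehat{C}^j : x_i > \theta\}$. All other leaves and their induced point sets are untouched, so their contributions to $\surcost$ are identical before and after the split. Hence it suffices to compare the pre-split contribution of $v$ with the combined post-split contribution of its two children.

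Next I would use the structural feature of $\surcost$ that makes the argument work: the minimum over reference centers is taken \emph{independently} for each leaf. Let $i^* \in \argmin_{i\in[k]} \sum_{\vectx \in \widehat{C}^j} \norm{\vectx - \vectmu^i}_2^2$ denote the best reference center for the parent leaf, so the pre-split contribution equals $\sum_{\vectx \in \widehat{C}^j} \norm{\vectx - \vectmu^{i^*}}_2^2$. Using $i^*$ as a (possibly suboptimal) feasible choice for \emph{both} children yields
$$\min_{i\in[k]} \sum_{\vectx \in \widehat{C}^L} \norm{\vectx - \vectmu^i}_2^2 \;+\; \min_{i\in[k]} \sum_{\vectx \in \widehat{C}^R} \norm{\vectx - \vectmu^i}_2^2 \;\leq\; \sum_{\vectx \in \widehat{C}^L} \norm{\vectx - \vectmu^{i^*}}_2^2 + \sum_{\vectx \in \widehat{C}^R} \norm{\vectx - \vectmu^{i^*}}_2^2 \;=\; \sum_{\vectx \in \widehat{C}^j} \norm{\vectx - \vectmu^{i^*}}_2^2,$$
where the last equality uses that $\widehat{C}^L$ and $\widehat{C}^R$ partition $\widehat{C}^j$. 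Because this inequality holds for \emph{every} admissible split (it does not depend on the specific $(i,\theta)$), it holds in particular for the maximum-gain split selected in line~\ref{alg:max_gain}. Summing over the unchanged leaves gives $\surcost(T_{\text{new}}) \leq \surcost(T_{\text{old}})$ at that iteration.

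I do not anticipate a real obstacle here; the only subtlety worth emphasizing is \emph{why} the surrogate cost, as opposed to the true $k$-means cost, admits such a clean argument. The proof hinges on two coupled properties: (i) the reference centers $\vectmu^1,\ldots,\vectmu^k$ are fixed for the duration of the run, and (ii) the labeling function $\ell$ in \exkmc{} assigns each leaf its best reference center independently of the other leaves. Together these make splitting a pure refinement operation for $\surcost$, which is precisely what the one-step inequality above formalizes. A clean induction on iterations then promotes the one-step bound to the stated theorem.
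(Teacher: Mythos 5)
Your proposal is correct and follows essentially the same route as the paper's proof: isolate the split leaf, observe that all other leaves' contributions are unchanged, and bound the children's combined cost by reusing the parent's optimal reference center as a feasible choice for both children, noting the bound holds for every admissible split and hence for the one the algorithm selects. No gaps.
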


To prove the theorem notice that when the algorithm performs a split at node $v$, the cost of all points not in $v$ will remain intact. Additionally, the algorithm can assign the two children of $v$ the same label as in $v$. This choice will not change the cost of the points in $v.$ The full proof appears in the appendix. 
Eventually \exkmc{} converges to a low-cost tree, as the next theorem proves. Specifically, after $n$ steps in the worst-case, we will always get a refinement, see Corollary~\ref{cor:surrogate_cost_after_n_stpss} in Appendix~\ref{appendix:proofs}.
\begin{theorem}\label{thm:algorithm_refinement}
Let $C$ be a reference clustering. If while running {\em \exkmc} the threshold tree $T$ refines $C$, then the clustering induced by $(T,\ell)$ equals $C$, where $\ell$ is the surrogate cost labeling.
\end{theorem}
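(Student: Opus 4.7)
The plan is to reduce the equality of clusterings to a single pointwise observation about the surrogate-cost labeling. Since $T$ refines the reference clustering $C = (C^1, \ldots, C^k)$, every leaf $L$ of $T$ is entirely contained in some reference cluster $C^{i(L)}$; call $\mu^{i(L)}$ the ``home'' center of that leaf. I would show that the surrogate-cost labeling $\ell$ assigns the home center to every leaf, which immediately gives that $(T,\ell)$ partitions $\mathcal{X}$ in exactly the same way as $C$.

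The key lemma is: for any set $Y \subseteq C^{i}$, the reference center minimizing $\sum_{\vectx \in Y}\|\vectx - \vectmu^j\|_2^2$ over $j \in [k]$ is $j = i$. This follows from the pointwise property of the reference clustering: each $\vectx \in C^i$ satisfies $\|\vectx - \vectmu^i\|_2^2 \leq \|\vectx - \vectmu^j\|_2^2$ for every $j \in [k]$, because $C$ is the Voronoi-style clustering induced by $\vectmu^1,\ldots,\vectmu^k$ (the reference clustering is obtained by assigning each point to its nearest reference center). Summing the pointwise inequality over all $\vectx \in Y$ gives the lemma. Applying this with $Y = L$ (the set of points surviving at leaf $L$) and $i = i(L)$, the \texttt{find\_labels}-style argmin in the definition of $\ell$ picks $\vectmu^{i(L)}$, so $\ell(L) = i(L)$.

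With the labeling pinned down, concluding is routine. Each point $\vectx \in \mathcal{X}$ ends up in some leaf $L$; by the refinement hypothesis $\vectx \in C^{i(L)}$, and by the lemma it is labeled $i(L)$. Hence its cluster assignment under $(T,\ell)$ is the same as under $C$, so the two $k$-clusterings coincide.

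The only genuine subtlety is handling ties when a point is equidistant to two reference centers, or when the argmin in the surrogate labeling is not unique. I expect this to be a purely cosmetic issue: one can either assume any fixed tie-breaking rule is used consistently both when generating $C$ from the reference centers and when computing $\ell$, or replace equality of clusterings by ``equal up to a relabeling of the tied points,'' which does not affect the cost. I would state the tie-breaking convention once at the start of the proof and then ignore it. Beyond that, no inductive argument on the structure of $T$ is needed, since the lemma applies leaf-by-leaf independently.
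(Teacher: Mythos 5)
Your proposal is correct and follows essentially the same route as the paper: both arguments rest on the observation that, because $T$ refines the reference (Voronoi) clustering, all points in a leaf share the same nearest reference center, so the leaf-wise argmin in the surrogate labeling selects that common center and the induced clustering coincides with $C$. You merely make explicit the summing-of-pointwise-inequalities step and the tie-breaking convention that the paper leaves implicit.
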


We also provide a worst-case guarantee compared to the optimal $k$-means solution. We show that the \exkmc{} algorithm using the \imm{} base tree provably achieves an $O(k^2)$ approximation to the $k$-means cost. The proof uses Theorem~\ref{thm:surrogate-non-increasing} combined with the previous analysis of the \imm{} algorithm~\cite{dasgupta2020explainable}, see Appendix~\ref{appendix:proofs}.
\begin{theorem}\label{thm:worst_case}
Algorithm~\ref{algo:exkmc} with the {\em \imm{}} base tree is an $O(k^2)$ approximation to the optimal $k$-means cost.
\end{theorem}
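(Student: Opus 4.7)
The plan is to chain three facts: (i) the IMM base tree already achieves an $O(k^2)$ approximation to the optimal $k$-means cost, (ii) by Theorem~\ref{thm:surrogate-non-increasing} the surrogate cost only decreases as \exkmc{} expands the tree, and (iii) the surrogate cost is an upper bound on the $k$-means cost of any labeled tree. Composing these three inequalities yields the claim.

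First I would set up notation. Let $T_0$ denote the IMM base tree with $k$ leaves and let $\pi$ be the bijection that IMM guarantees, assigning the unique reference center $\vectmu^{\pi(j)}$ that lies inside leaf $j$. Writing $\mathsf{OPT}$ for the optimal $k$-means cost, the analysis of Dasgupta, Frost, Moshkovitz, and Rashtchian~\cite{dasgupta2020explainable} gives
$$\sum_{j=1}^{k} \sum_{\vectx \in \widehat{C}^j} \norm{\vectx - \vectmu^{\pi(j)}}_2^2 \;\leq\; O(k^2)\cdot \mathsf{OPT},$$
provided the reference centers themselves come from a constant-factor approximation, which is a standing assumption. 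Because the surrogate cost at each leaf takes a minimum over all $k$ reference centers rather than the specific center forced by $\pi$, we immediately get $\surcost^{\vectmu^1,\ldots,\vectmu^k}(T_0) \leq O(k^2)\cdot \mathsf{OPT}$.

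Second, I would invoke Theorem~\ref{thm:surrogate-non-increasing}. Let $T'$ be the tree returned by \exkmc{}. Then $\surcost(T') \leq \surcost(T_0) \leq O(k^2)\cdot \mathsf{OPT}$. Third, I would relate $\surcost(T')$ back to $\cost(T',\ell)$, where $\ell$ is the labeling that maps each leaf to the reference center minimizing the surrogate term at that leaf. The induced $k$-clustering groups together all leaves sharing the same label; for each such group $\widehat{C}^j$ with mean $\widehat{\vectmu}^j$, the well-known identity $\sum_{\vectx\in\widehat{C}^j}\norm{\vectx-\widehat{\vectmu}^j}_2^2 \leq \sum_{\vectx\in\widehat{C}^j}\norm{\vectx-\vectmu^i}_2^2$ for any fixed $\vectmu^i$ shows that replacing each leaf's reference center by the mean of its cluster can only decrease the sum of squared distances. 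Summing over clusters yields $\cost(T',\ell) \leq \surcost(T')$, and the theorem follows.

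The main obstacle is just bookkeeping around the two granularities at play: the surrogate sums over the $k'$ leaves (each with its own reference center), while the $k$-means cost sums over the $k$ clusters (each a union of leaves, with center equal to their joint mean). The potential confusion is that the mean of a union of leaves is not the mean of any individual leaf, so the variance inequality must be applied at the level of the full cluster rather than leaf-by-leaf; once the labeling $\ell$ is used to regroup the leaves before invoking the inequality, everything composes cleanly. Nothing else is quantitatively delicate, since all the $O(k^2)$ loss is inherited intact from the IMM guarantee.
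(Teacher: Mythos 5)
Your proposal is correct and follows essentially the same three-step chain as the paper's proof: the IMM analysis bounds the surrogate cost of the base tree by $O(k^2)$ times the optimum, Theorem~\ref{thm:surrogate-non-increasing} propagates that bound to the expanded tree, and the surrogate cost upper-bounds the $k$-means cost of the final labeled tree. Your extra care in step three (applying the variance inequality at the level of each cluster, i.e., the union of leaves sharing a label, rather than leaf-by-leaf) is exactly the right justification for the inequality the paper states more tersely.
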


\subsection{Example where \exkmc{} provably improves upon \imm{}}
Prior work designed a dataset and proved that a tree with $k$ leaves incurs an $\Omega(\log k)$ approximation on this dataset~\cite{dasgupta2020explainable}, which we call {\em Synthetic II}. It contains $k$ clusters that globally are very far from each other, but locally, clusters look the same. 

\paragraph{Synthetic II dataset.}
This dataset $\cD \subseteq \{-1,0,1\}^d$ consists of $k$ clusters where any two clusters are very far from each other while inside any cluster the points differ by at most two features. 
The dataset is created by first taking $k$ random binary \emph{codewords} $\vectv^1,\ldots, \vectv^k\in\{-1,1\}^d$. The distance between any two codewords is at least $\alpha d,$ where $\alpha$ is some  constant.
For $i \in [k]$, we construct cluster $C^{\vectv^i}$ by taking the codeword $\vectv^i$ and then changing one feature at a time to be equal to $0.$ The size of each cluster is $|C^{\vectv^i}|=d.$ 
There are in total $dk$ points in the dataset $\cD$. The optimal $k$ centers for this dataset are the codewords. We take $d > c k^2$ for some universal constant $c > 1$, and we assume that $k$ is sufficiently large.

Our approach of explaining this dataset contains a few steps (i) using $k$-means++ to generate the reference centers, (ii) building the base tree with \imm{}, and (iii) expanding the tree with \exkmc{}. We prove that this leads to an optimal tree-based clustering with $O(k \log k)$ leaves. In fact, the cost ratio decreases linearly as the number of leaves increases, as the next theorem proves. We experimentally confirm this linear decrease in the next section. 

\begin{theorem}\label{thm:synthetic_II_upper_bound}
Let $\cD$ be the dataset described above. There is a constant $c'>0$ such that with probability at least $1-O\left(\frac{k\log k}{d}\right)$ over the randomness in $k$-means++, 
the output of {\em \exkmc} with centers from $k$-means++, using the base tree from {\em \imm{}} and $k'$ desired centers, has approximation ratio at most $$O\left(\max\left\{\log k -c'\cdot\frac{k'}{k},1\right\}\right)$$
compared to the optimal $k$-means clustering of $\cD$.
\end{theorem}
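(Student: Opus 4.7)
The plan is to bound the surrogate cost in three stages corresponding to the three algorithmic components. First I would analyze the $k$-means++ seeding. Because distinct codewords are $\Theta(d)$ apart in squared distance while any point lies at squared distance at most $1$ from its own codeword, once $j$ reference centers have been placed inside $j$ distinct clusters the $D^2$-sampling probability of drawing the next one from an already-represented cluster is $O(j/((k-j)d))$. Summing over the $k$ rounds via the harmonic-type bound $\sum_{j=1}^{k-1} j/(k-j) = O(k\log k)$ gives overall failure probability $O(k\log k/d)$, exactly as claimed. Conditioning on this event, each reference center $\vectmu^i$ differs from some codeword $\vectv^{\pi(i)}$ in at most one coordinate, so for the rest of the argument I may identify $\vectmu^i$ with $\vectv^{\pi(i)}$.

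Second, I would analyze the $k$-leaf \imm{} base tree. Since the codewords take values in $\{-1,+1\}$, every informative IMM threshold has the form $\theta=0$ on some coordinate $i$; on such a split the only mistaken points are the variants $\vectv^s_i$ in clusters whose codeword has $\vectv^s[i]=+1$. Because random codewords are approximately balanced on every coordinate (by a Chernoff argument on the $\pm 1$ entries), IMM uses approximately balanced splits and produces a tree of depth $\Theta(\log k)$ with $\Theta(k)$ mistakes per level, hence $\Theta(k\log k)$ mistakes in total. A mistaken point lies $\Theta(\sqrt d)$ from every codeword other than its own and so contributes $\Theta(d)$ to $\surcost$, while a correctly routed point contributes $O(1)$. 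Summing gives $\Theta(kd\log k)$ for the base tree, which together with the $\Theta(kd)$ optimum reproduces the $O(\log k)$ ratio at $k'=k$.

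Third, I would show that each expansion step of \exkmc{} shaves $\Omega(d)$ off $\surcost$ as long as any mistake remains. The key lemma is: any leaf whose label is $\vectmu^i$ but that still contains a mistaken point from some cluster $s\neq i$ admits a coordinate $l$ on which $\vectv^s[l]\neq\vectv^i[l]$ (such $l$ exists since the codewords are $\Omega(d)$-separated), and splitting at $(l,0)$ peels at least one point of cluster $s$ into a new child, which the \texttt{find\_labels} subroutine will relabel with $\vectmu^s$. That isolated point's contribution drops from $\Theta(d)$ to $O(1)$, giving a specific split of gain $\Omega(d)$. Since \exkmc{} picks the split of maximum gain, every step earns gain at least $\Omega(d)$ until the surrogate cost has dropped to $O(kd)$. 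Combined with Theorem~\ref{thm:surrogate-non-increasing}, after $k'-k$ steps the surrogate cost is at most
\[
\max\bigl\{\,\Theta(kd\log k) - c'(k'-k)d,\ \Theta(kd)\,\bigr\}.
\]
Dividing by the optimum $\Theta(kd)$ and using $\cost\leq\surcost$ yields the claimed ratio $O(\max\{\log k-c'k'/k,\,1\})$.

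The hardest step is the per-round $\Omega(d)$ gain claim. One must rule out that \exkmc{}'s greedy choice could ever be forced into a split of gain $o(d)$, and one must carefully argue that $\Omega(k\log k)$ consecutive rounds can each extract an independent $\Omega(d)$ gain before all mistakes have been resolved; this is where the balanced-codeword fact from step two is invoked a second time to guarantee a supply of productive coordinates inside every leaf still harboring a mistake.
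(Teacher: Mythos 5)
Your three-stage decomposition (seeding succeeds with probability $1-O(k\log k/d)$; the \imm{} tree has $O(k\log k)$ mistakes each costing $\Theta(d)$, hence surrogate cost $O(dk\log k)$; each \exkmc{} round gains $\Omega(d)$ while a mistake survives) is exactly the paper's architecture, and your stages one and two match the paper's supporting claims almost line for line. The gaps are concentrated in stage three. First, your witness split at $(l,0)$ uses the wrong threshold: with $\vectv^s[l]=-1$ and $\vectv^{i}[l]=+1$, the cut $x_l\le 0$ also peels off the one point of the leaf's own cluster $i$ whose $l$-th coordinate is zeroed, if that point is present in the leaf. That point then lands in a child labeled $\vectmu^s$ and its cost jumps from $O(1)$ to $\Theta(d)$, which can cancel the $\Theta(d)$ you save on a single repaired mistake; the net gain of your split is then only $O(\sqrt d\log k)$, not $\Omega(d)$. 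The paper avoids this by cutting at $\theta=\pm 1/2$ (e.g.\ $x_l\le -1/2$ when the mistaken point has $p_l=-1\neq \vectv^{i}[l]$), so that every point of cluster $i$, whose $l$-th coordinate is $0$ or $+1$, stays on the labeled side and only mistaken points can cross.

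Second, you correctly identify but do not close the hardest step: why the greedy max-gain split cannot be forced into a gain of $o(d)$, and in particular why the mistake count never increases. That monotonicity is what keeps the peeled-off side down to $O(k\log k)$ points (only mistakes can cross the cut), so that relabeling them costs only $O(\sqrt d\,k\log^2 k)=o(d)$, and it is what keeps supplying an $\Omega(d)$-gain witness in every later round. Your proposed fix, re-invoking codeword balance to guarantee ``productive coordinates,'' is not the mechanism. The paper instead writes the surrogate cost of every node as $\#\text{mistakes}\cdot(d/2\pm\Delta)+\#\text{non-mistakes}$ with $\Delta=\Theta(\sqrt d)$ (via the pairwise-distance concentration claim) and reads off that the gain of any split is controlled by the net decrease in mistakes; hence any split with positive gain, let alone gain $\Omega(d)$, strictly reduces the mistake count, and the per-round guarantee propagates. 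With those two repairs your argument coincides with the paper's, including the final arithmetic $\bigl(c_2 dk\log k-c_1 d(k'-k)\bigr)/(c_3 dk)$ yielding the stated ratio.
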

We show that initially the \imm{} algorithm has approximation ratio of $O(\log k)$. Interestingly, this is also the lower bound for trees with exactly $k$ leaves, and hence, it constructs the best possible tree up to a constant factor. The theorem also states that after $O(k\log k)$ steps we reach the optimal clustering. We state this observation as the following corollary. 
\begin{corollary}
With probability at least $1-O\left(\frac{k\log k}{d}\right)$ over the randomness in $k$-means++, 
the output of {\em \exkmc}, with centers from $k$-means++ and base tree of {\em \imm{}}, after $O(k\log k)$ steps of the algorithm, it reaches the optimal $k$-means clustering of $\cD$. 
\end{corollary}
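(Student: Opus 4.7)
The plan is to obtain the corollary as an almost immediate consequence of Theorem~\ref{thm:synthetic_II_upper_bound} combined with Theorem~\ref{thm:algorithm_refinement}. Pick $k' = \lceil (\log k + 1)/c' \rceil \cdot k = O(k\log k)$; for this choice, $\log k - c'\cdot k'/k \leq -1$, so the $\max$ in the bound of Theorem~\ref{thm:synthetic_II_upper_bound} collapses to $1$. Hence, with probability at least $1 - O(k\log k/d)$, the approximation ratio achieved by \exkmc{} on $\cD$ after $O(k\log k)$ steps is at most some absolute constant.

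To lift this constant-factor guarantee to \emph{exact} optimality, I would invoke Theorem~\ref{thm:algorithm_refinement}. The high-probability event underlying Theorem~\ref{thm:synthetic_II_upper_bound} is that $k$-means++ selects exactly one seed per codeword cluster, so the reference centers coincide with $\vectv^1,\ldots,\vectv^k$ and the reference clustering equals $\{C^{\vectv^1},\ldots,C^{\vectv^k}\}$, which is the unique optimal $k$-means clustering of $\cD$. By Theorem~\ref{thm:algorithm_refinement}, it therefore suffices to show that after $O(k\log k)$ iterations the tree produced by \exkmc{} \emph{refines} this reference clustering; i.e., every leaf is contained in a single $C^{\vectv^i}$. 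Once this holds, Theorem~\ref{thm:algorithm_refinement} yields that the induced clustering equals the reference clustering, which is optimal.

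The main obstacle is establishing the refinement claim after $O(k\log k)$ splits. I would extract it from the internals of the proof of Theorem~\ref{thm:synthetic_II_upper_bound}: after the \imm{} base tree, each codeword is isolated in its own leaf while a small number of ``leaked'' points from other clusters still lie in the wrong leaf (this is precisely what produces the initial $O(\log k)$ ratio). Each subsequent \exkmc{} split, chosen by maximum surrogate gain, strictly separates some of these leaked points from their host leaf, and a simple amortization over the $k$ leaves and $O(\log k)$ rounds of leakage per leaf shows that $O(k\log k)$ splits suffice to make every leaf monochromatic with respect to $\{C^{\vectv^1},\ldots,C^{\vectv^k}\}$. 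A cleaner alternative, if the decrement rate in Theorem~\ref{thm:synthetic_II_upper_bound} is tight, is to observe that any non-refining tree on $\cD$ must misclassify at least one point and thereby incur an additive cost gap of $\Omega(\alpha d)$ over the optimum due to the codeword separation; this exceeds the slack permitted by the $O(1)$ ratio once $k' = O(k\log k)$, forcing the tree to already be a refinement and the clustering to match the optimum exactly.
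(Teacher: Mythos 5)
Your primary route is essentially the paper's own argument. The paper derives this corollary directly from the internals of the proof of Theorem~\ref{thm:synthetic_II_upper_bound}: with probability $1-O\left(\frac{k\log k}{d}\right)$ the $k$-means++ centers are exactly the codewords (Claim~\ref{clm:lower_bound_dataset_centers_k_means}), the \imm{} base tree begins with only $O(k\log k)$ mistakes (points whose reference center differs from their leaf's label), every split that \exkmc{} actually chooses has gain $\Omega(d)$ and any split with positive gain of that order must strictly decrease the total number of mistakes; hence after $O(k\log k)$ iterations there are no mistakes, every leaf is contained in a single $C^{\vectv^i}$, and the induced clustering is the reference clustering, which is optimal. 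Your ``amortization over leaked points'' is exactly this mistake-counting argument, and routing the final step through Theorem~\ref{thm:algorithm_refinement} (refinement implies equality with the reference clustering) is a clean way to phrase what the paper leaves implicit.

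Two caveats. First, the opening detour through the approximation-ratio bound of Theorem~\ref{thm:synthetic_II_upper_bound} buys you nothing: an $O(1)$ ratio is strictly weaker than exact optimality, so all the work still sits in the refinement claim. Second, your proposed ``cleaner alternative'' does not work. The optimal cost of $\cD$ is $\Theta(kd)$, so a constant-factor approximation guarantee permits an \emph{additive} slack of $\Theta(kd)$, whereas a single surviving mistake increases the cost by only $\Theta(d)$ (a point assigned to the wrong codeword pays roughly $d/2$ instead of at most $1$). The $\Omega(\alpha d)$ penalty is therefore a $\Theta(1/k)$ fraction of the allowed slack and cannot force the tree to be a refinement; only the monotone decrease of the mistake count yields exactness.
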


The full proof of Theorem~\ref{thm:synthetic_II_upper_bound} is in the appendix. The main steps of the proof are the following. We first prove that with high probability the output of the $k$-means++ algorithm is actually the optimal set of $k$ centers, that is, the codewords in the dataset construction. Intuitively, this holds because globally the optimal centers and clusters are very far from each other.  The second step is to define points that are \emph{mistakes}, which are points in a leaf $u$ with the property that their optimal center is different than the label of $u.$ To guarantee that the algorithm always makes progress, we prove that there is always a split that reduces the number of mistakes. Lastly, we prove that each such split reduces the cost by $\Omega(d).$ This completes the proof.

\section{Empirical Evaluation}
\label{section:experiments}
\smallpar{Algorithms}
We compare the following clustering methods (see Appendix~\ref{sec:appendix_setup} for details):
\begin{itemize}
    \item {\bf Reference Clustering.} \sklearn{} \texttt{KMeans}, 10 random initializations, 300 iterations. 
    \item {\bf CART.} Standard decision tree from \sklearn{}, minimizing \texttt{gini} impurity. Points in the dataset are assigned labels using the reference clustering.
    \item {\bf KDTree.} Split highest variance feature at median threshold. Size determined by \texttt{leaf\_size} parameter. Label leaves to minimize $\surcost$ w.r.t. centers of the reference clustering.
    \item {\bf CLTree.} Explainable clustering method. Public implementation \cite{cltreeP, liu2005clustering}.
    \item {\bf CUBT.} Explainable clustering method. Public implementation~\cite{cubtR, fraiman2013interpretable}.
    \item {\bf ExKMC.} Our method (Algorithm~\ref{algo:exkmc}) starting with an empty tree; minimizes $\surcost$ at each split w.r.t. centers of the reference clustering.
    \item {\bf ExKMC (base: IMM).} Our method (Algorithm~\ref{algo:exkmc}) starting with an \imm{} tree with $k$ leaves; then, minimizes $\surcost$ at each split w.r.t. centers of the reference clustering.
\end{itemize}

\smallpar{Set-up} 
We use 10 real and 2 synthetic datasets; details in Appendix~\ref{sec:appendix_datasets}. The number of clusters $k$ and the number of leaves $k'$ are inputs. We start with $k$ equal to number of labels for classification datasets. We plot the cost ratio compared to the reference clustering (best $=1.0$). For the baselines, we do hyperparameter tuning and choose the lowest cost clustering at each $k'$. \cubt{} and \cltree{} could only be feasibly executed on six small real datasets (we restrict computation time to one hour).

\begin{figure}
        \renewcommand{\arraystretch}{1.2}
    \begin{tabular}{c c c c}       
        \rowcolor{gray!15}\multicolumn{4}{l}{\textbf{Small Datasets}}\\
        \begin{subfigure}{.24\linewidth}
        \centering
        \includegraphics[width=\linewidth]{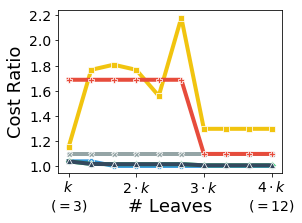}
        \caption{\scriptsize{Iris}}
        \label{fig:iris}
        \end{subfigure}& 
        \begin{subfigure}{.24\linewidth}
        \centering
        \includegraphics[width=\linewidth]{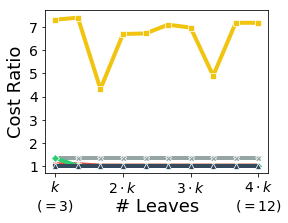}
        \caption{\scriptsize{Wine}}
        \label{fig:wine}
        \end{subfigure}& 
        \multicolumn{2}{c}{
         \begin{subfigure}{.4\linewidth}
            \centering
            \includegraphics[width=\linewidth]{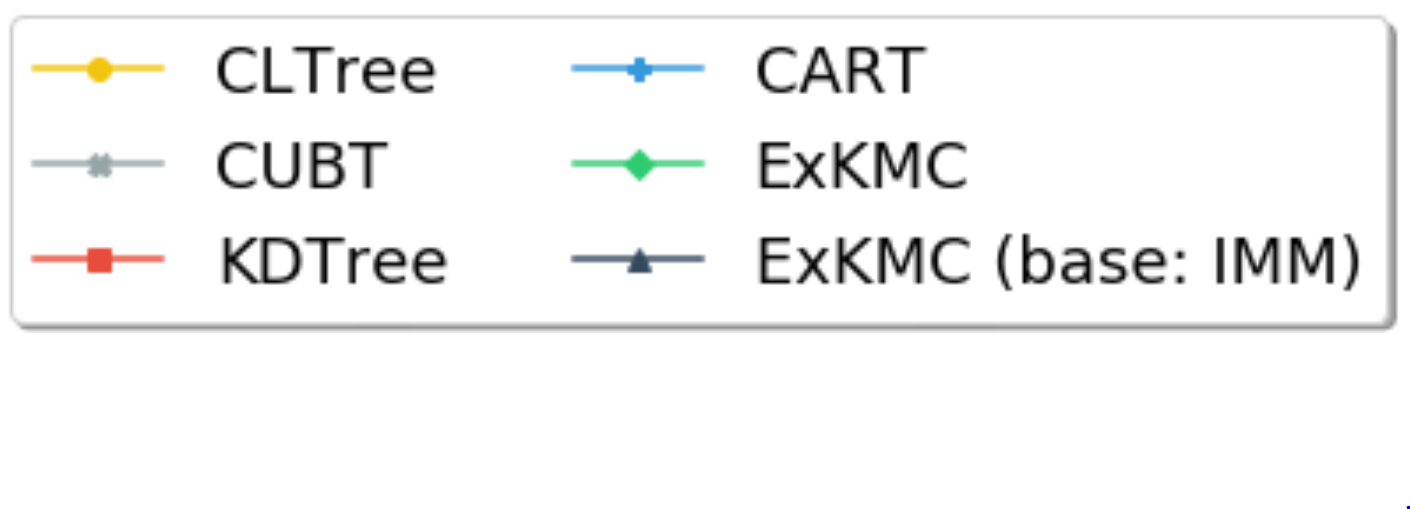}
        \end{subfigure}
        }\\
        \begin{subfigure}{.24\linewidth}
        \centering
        \includegraphics[width=\linewidth]{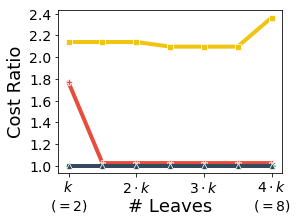}
        \caption{\scriptsize{Breast Cancer}}
        \label{fig:brease_cancer}
        \end{subfigure}& 
        \begin{subfigure}{.24\linewidth}
        \centering
        \includegraphics[width=\linewidth]{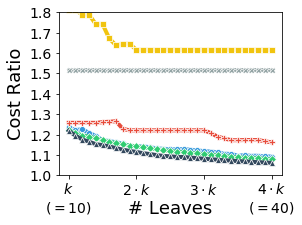}
        \caption{\scriptsize{Digits}}
        \label{fig:digits}
        \end{subfigure}&
        \begin{subfigure}{.24\linewidth}
        \centering
        \includegraphics[width=\linewidth]{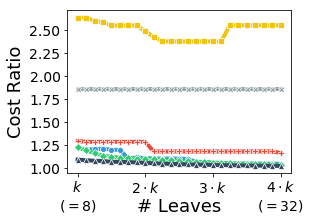}
        \caption{\scriptsize{Mice Protein}}
        \label{fig:mice_protein}
        \end{subfigure}&
        \begin{subfigure}{.24\linewidth}
        \centering
        \includegraphics[width=\linewidth]{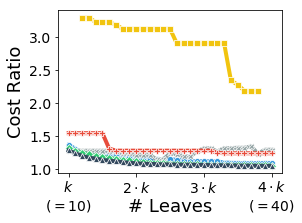}
        \caption{\scriptsize{Anuran}}
        \label{fig:anurn}
        \end{subfigure} \vspace{1ex}\\
        \rowcolor{gray!15} \multicolumn{4}{l}{\textbf{Larger Datasets}}\\
        \begin{subfigure}{.24\linewidth}
        \centering 
        \includegraphics[width=\linewidth]{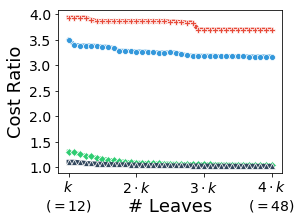}
        \caption{\scriptsize{Avila}}
        \label{fig:avila}
        \end{subfigure}&
        \begin{subfigure}{.24\linewidth}
        \centering
        \includegraphics[width=\linewidth]{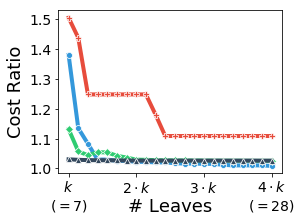}
        \caption{\scriptsize{Covtype}}
        \label{fig:covtype}
        \end{subfigure}&
        \begin{subfigure}{.24\linewidth}
        \centering
        \includegraphics[width=\linewidth]{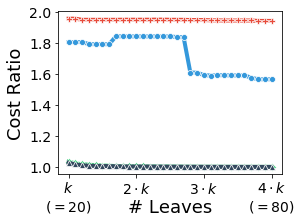}
        \caption{\scriptsize{20newsgroups}}
        \label{fig:20newsgroups}
        \end{subfigure}&
        \begin{subfigure}{.24\linewidth}
        \centering
        \includegraphics[width=\linewidth]{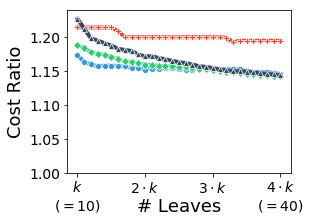}
        \caption{\scriptsize{CIFAR-10}}
        \label{fig:cifar10}
        \end{subfigure}\vspace{1ex} \\
        \rowcolor{gray!15}
        \multicolumn{4}{l}{\textbf{Synthetic Datasets}}\\
        &
        \begin{subfigure}{.24\linewidth}
        \centering
        \includegraphics[width=\linewidth]{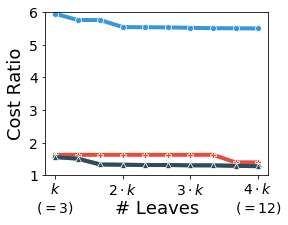}
        \caption{\scriptsize{Synthetic I}}
        \label{fig:synthetic1}
        \end{subfigure}&
        \begin{subfigure}{.24\linewidth}
        \centering
        \includegraphics[width=\linewidth]{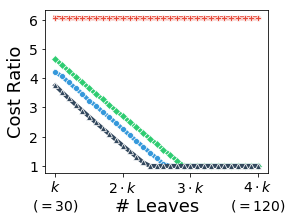}
        \caption{\scriptsize{Synthetic II}}
        \label{fig:synthetic2}
        \end{subfigure}&\\
    \end{tabular}
    \caption{
        Each graph plots the ratio ($y$-axis) of the tree-based clustering cost to the near-optimal $k$-means clustering as a function of the number of leaves ($x$-axis). Lower is better with best $= 1.0$. Our algorithm (black line) consistently performs well. See Figure~\ref{fig:experiments_error_bars} in Appendix~\ref{sec:appendix_extra_exp} for error bars.
    }
    \label{fig:experiments}
\end{figure}

\subsection{Experimental Results}

\smallpar{Real datasets} \cltree{} performs the worst in most cases, and the cost is often much larger than the other methods. Turning to \cubt{}, we see that on most datasets it is competitive (but often not the best). However, on Digits and Mice Protein, \cubt{} fails to converge to a good clustering. We see that \cart{} performs well on many of the datasets, as expected. On Avila and 20newsgroups, \cart{} has a fairly high cost. For the small datasets, KDTree performs competitively, but on large datasets, the cost remains high. On all of the datasets, our method \exkmc{} performs well. It often has the lowest cost throughout, except for CIFAR-10, where it requires around $3.5 \cdot k$ leaves to be competitive. When the number of leaves is exactly $k$, we can also see the performance of the \imm{} algorithm, where we see that its cost is quite low, in contrast to the previous theoretical analysis~\cite{dasgupta2020explainable}. As usual, the outlier is CIFAR-10, where the cost is very high at $k$ leaves, and then quickly decreases as \exkmc{} expands the tree. We also evaluate \exkmc{} by starting with an empty tree (without using \imm{} at all). In general, the cost is worse or the same as \exkmc{} with \imm{}.
We observe that the \cltree{} cost varies as a function of the number of leaves. The reason is that we perform a hyperparameter search for each instance separately, and perhaps surprisingly, the cost can sometimes increase. While we could have used the best tree with fewer leaves, this may be less representative of real usage. Indeed, the user would specify a desired number of leaves, and they may not realize that fewer leaves would lead to lower $k$-means cost.

\smallpar{Synthetic datasets}
We highlight two other aspects of explainable clustering. The Synthetic I dataset in Figure \ref{fig:synthetic1} is bad for \cart{}. We adapt a dataset from prior work that forces \cart{} to incur a large cost due to well-placed outliers~\cite{dasgupta2020explainable}. \cart{} has cost ratio above five, while other methods converge to a near-optimal cost. The Synthetic II dataset in Figure~\ref{fig:synthetic2} is sampled from a hard distribution that demonstrates an $\Omega(\log k)$ lower bound for any explainable clustering with $k$ leaves~\cite{dasgupta2020explainable}. The $k$ centers are random $\pm 1$ vectors in $\R^d$, and the clusters contain the $d$ vectors where one of the center's coordinates is set to zero. As expected, the \imm{} algorithm starts with a high $k$-means cost. When \exkmc{} expands the tree to have more leaves, the cost quickly goes down (proof in Appendix~\ref{apx:synthetic_II}).

\smallpar{Distance to the reference clustering} 
In Appendix~\ref{sec:appendix_extra_exp}, we report accuracy when points are labeled by cluster. Overall, we see the same trends as with cost. 
However, on some datasets, \cart{} more closely matches the reference clustering than \exkmc{}, but the \cart{} clustering has a higher cost. This highlights the importance of optimizing the $k$-means cost, not the classification accuracy.

\smallpar{Qualitative analysis}
Figure~\ref{fig:imm-exkmc-tree} depicts two example trees with four and eight leaves, respectively. on a subset of four clusters from the 20newsgroups dataset. The \imm{} base tree uses three features (words) to define four clusters. Then, \exkmc{} expands one of the leaves into a larger subtree, using seven total words to construct more nuanced clusters that better correlate with the newsgroup topics.

\smallpar{Running time}
Figure \ref{fig:runtime_single_process} shows the runtime of three methods on seven real datasets (commodity laptop, single process, i7 CPU @ 2.80GHz, 16GB RAM). Both \imm{} and \exkmc{} first run \texttt{KMeans} (from \sklearn{}, 10 initializations, 300 iterations), and we report cumulative times. The explainable algorithms construct trees in under 15 minutes. On six datasets, they incur $0.25\times$ to $1.5\times$ overhead compared to standard \texttt{KMeans}. The 20newsgroups dataset has the largest overhead because \sklearn{} optimizes for sparse vectors while \imm{} and \exkmc{} currently do not. In Appendix~\ref{sec:appendix_runtime}, we also measure an improvement to \exkmc{} with feature-wise parallelization of the expansion step.

\smallpar{Surrogate vs. actual cost} In Figure~\ref{fig:suroogate_vs_optimal}, we compare the surrogate cost (using the $k$ reference centers) with the actual $k$-means cost (using the means of the clusters as centers). In general, the two costs differ by at most 5-10\%. On three datasets, they converge to match each other as the number of leaves grows. Covtype is an exception, where the two costs remain apart. 

\begin{figure}[!htb]
    \centering
    \hspace*{\fill}%
    \begin{minipage}[t]{.65\textwidth}
        \centering
        \vspace{0pt}
        \includegraphics[width=\linewidth]{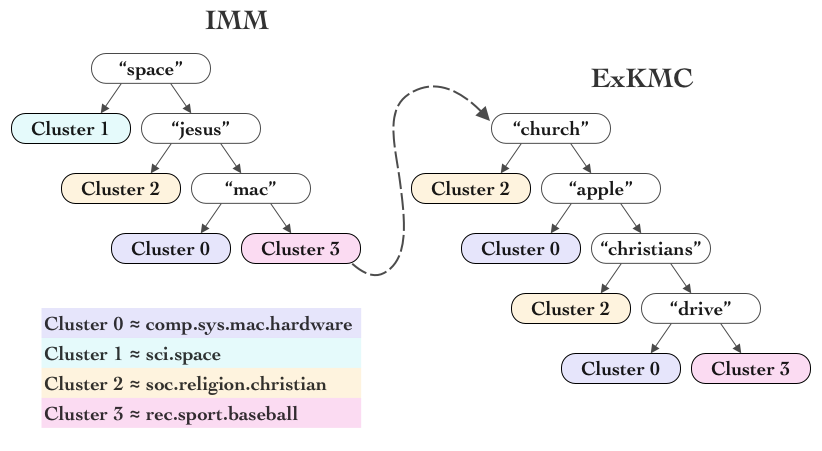}
        \caption{{Explainability-accuracy trade-off} using more leaves on a subset of four clusters from 20newsgroups. As \exkmc{} expands the \imm{} tree, it refines the clusters and reduces the $4$-means cost.}
        \label{fig:imm-exkmc-tree}
    \end{minipage}%
    \hfill
    \begin{minipage}[t]{.32\textwidth}
        \centering
        \vspace{0pt}
        \includegraphics[width=\linewidth]{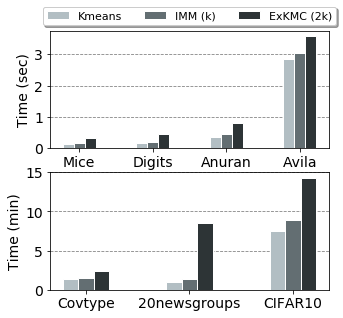}
        \caption{Runtime.}
        \label{fig:runtime_single_process}
    \end{minipage}\\
    \hspace*{\fill}%
\end{figure}
\begin{figure}[!ht]
    \centering
    \begin{subfigure}{.24\linewidth}
    \centering
    \includegraphics[width=\linewidth]{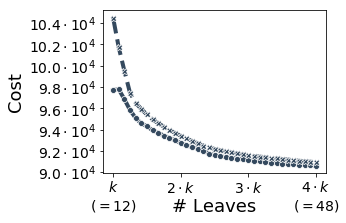}
    \caption{\scriptsize{Avila}}
    \label{fig:avila_sur}
    \end{subfigure}%
    \begin{subfigure}{.24\linewidth}
    \centering
    \includegraphics[width=\linewidth]{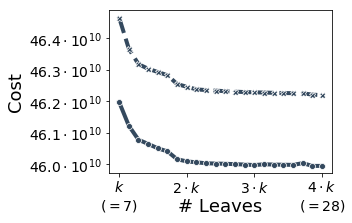}
    \caption{\scriptsize{Covtype}}
    \label{fig:covtype_sur}
    \end{subfigure}%
    \begin{subfigure}{.24\linewidth}
    \centering
    \includegraphics[width=\linewidth]{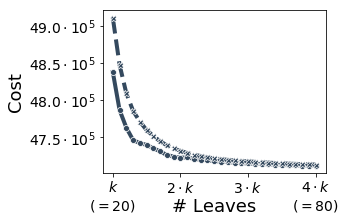}
    \caption{\scriptsize{20newsgroups}}
    \label{fig:20newsgroups_sur}
    \end{subfigure}%
    \begin{subfigure}{.24\linewidth}
    \centering
    \includegraphics[width=\linewidth]{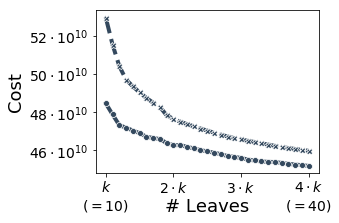}
    \caption{\scriptsize{CIFAR-10}}
    \label{fig:cifar10_sur}
    \end{subfigure}%
    \caption{Comparison of $\surcost$ (dashed line) vs. $\cost$ (full line) of \exkmc{} with \imm{} base tree.}
    \label{fig:suroogate_vs_optimal}
\end{figure}


\subsection{Discussion}

\smallpar{Trade-off} The main objective of our new algorithm is to provide a flexible trade-off between explainability and accuracy. Compared to the previous \imm{} algorithm, we see that using \exkmc{} to expand the threshold tree consistently leads to a lower cost clustering. We also see that using our surrogate cost improves the running time without sacrificing effectiveness. 

\smallpar{Convergence} 
On some datasets, \imm{} produces a fairly low $k$-means cost with $k$ leaves. Expanding the tree with \exkmc{} to  between $2k$ and $4k$ leaves often results in nearly the same cost as the reference clustering. CIFAR-10 is an outlier, where none of the methods converge when using pixels as features (see also Appendix~\ref{sec:appendix_extra_exp}). In practice, a tree with $4k$ leaves only slightly increases the explanation complexity compared to a tree with $k$ leaves (and $k$ leaves are necessary). \exkmc{} successfully achieves a good tree-based clustering with better interpretability than standard $k$-means methods.

\smallpar{Low cost} The most striking part of the experiments is that 
tree-based clusterings can match the cost of standard clusterings. This is possible with a small tree, even on large, high-dimensional datasets. Prior to our work, this was not known to be the case. Therefore, \exkmc{} demonstrates that explanability can be obtained in conjunction with low cost on many datasets.
\section{Conclusion}

We exhibited a new algorithm, 
\exkmc{}, for generating an explainable $k$-means clustering using a threshold tree with a specified number of leaves. Theoretically, our algorithm has the property that as the number of leaves increases, the cost is non-increasing. This enables a deliberate trade-off between explainability and clustering cost. Extensive experiments showed that our algorithm often achieves lower $k$-means cost compared to several baselines. Moreover, we saw that \exkmc{} usually matches the reference clustering cost with only $4k$ leaves. Overall, \exkmc{} efficiently produces explainable clusters, and it could potentially replace standard $k$-means implementations in data science pipelines.  For future work, it would be interesting to prove convergence guarantees for \exkmc{} either based on separation properties of real data or a Gaussian mixture model assumption. Another direction is to reduce the running time through better parallelism and sparse feature vector optimizations.
Finally, our algorithm is feature-based, and therefore, if the features or data have biases, then these biases may also propagate and lead to biased clusters. It would be interesting future work to develop an explainable clustering algorithm that enforces fairness either in the feature selection process or in the composition of the clusters (for examples of such methods, see~\cite{backurs2019scalable, bera2019fair,huang2019coresets,kleindessner2019fair, mahabadi2020individual, schmidt2019fair}). 

\section*{Acknowledgements}
We thank Sanjoy Dasgupta for helpful discussions. We also thank Alyshia Olsen for help designing the figures.
Nave Frost has been funded by the  European Research  Council (ERC) under the European Unions Horizon 2020 research and innovation programme (Grant agreement No. 804302). The contribution of Nave Frost is part of a Ph.D. thesis research conducted at Tel Aviv University.

\medskip

{\small
\bibliography{references}
\bibliographystyle{abbrv}
}

\newpage
\appendix
\section{Omitted proofs and extra theoretical results}
\label{appendix:proofs}
\begin{proof}[Proof of Theorem~\ref{thm:surrogate-non-increasing}.]
Denote by $T^{k'}$ the tree at iteration $k'$ of the algorithm. 
We want to show that
$$\surcost^{\vectmu^1,\ldots,\vectmu^k}(T^{k'+1})\leq \surcost^{\vectmu^1,\ldots,\vectmu^k}(T^{k'}).$$
We prove a stronger claim: for any possible split (not just for the one by \exkmc{}) the surrogate cost will not increase.
Fix a cluster $\widehat{C}^{r}$ in $T^{k'}$  and suppose that we split it into two clusters $\widehat{C}^{r,1},\widehat{C}^{r,2}$ and that this results in the tree $T$. For the split the algorithm chooses, we have $T=T^{k'+1}$. 

We separate the surrogate cost into two terms: points that are in $\widehat{C}^r$ and those that are not
\begin{align*}
\surcost^{\vectmu^1,\ldots,\vectmu^k}(T) &=  \sum_{\substack{\widehat{C}^j\in T^{k'}\\ j\neq r}} \sum_{\vectx \in \widehat{C}^j} \|x - c^{\vectmu_1\ldots, \vectmu_k}(\widehat{C}^j) \|_2^2\\
&+\sum_{\vectx \in \widehat{C}^{r,1}} \|\vectx - c^{\vectmu_1\ldots, \vectmu_k}(\widehat{C}^{r,1}) \|_2^2
+\sum_{\vectx \in \widehat{C}^{r,2}} \|\vectx - c^{\vectmu_1\ldots, \vectmu_k}(\widehat{C}^{r,2}) \|_2^2,
\end{align*}
Importantly, the first term appears also in $\surcost^{\vectmu^1,\ldots,\vectmu^k}(T^{k'})$. Thus, 
$$\surcost^{\vectmu^1,\ldots,\vectmu^k}(T)- \surcost^{\vectmu^1,\ldots,\vectmu^k}(T^{k'})$$
is equal to 
$$\sum_{\vectx \in \widehat{C}^{r,1}} \|\vectx - c^{\vectmu_1\ldots, \vectmu_k}(\widehat{C}^{r,1}) \|_2^2
+\sum_{\vectx \in \widehat{C}^{r,2}} \|\vectx - c^{\vectmu_1\ldots, \vectmu_k}(\widehat{C}^{r,2}) \|_2^2- \sum_{\vectx \in \widehat{C}^{r}} \|\vectx - c^{\vectmu_1\ldots, \vectmu_k}(\widehat{C}^{r}) \|_2^2.$$
This is at most $0$ because we can give $\widehat{C}^{r,1}$ and $\widehat{C}^{r,2}$ the same center in $\vectmu^1,\ldots, \vectmu^k$ that was given to~$\widehat{C}^{r}$.
\end{proof}

\begin{proof}[Proof of Theorem~\ref{thm:algorithm_refinement}.]
  Once $T$ defines a clustering $(\widehat{C}^1,\ldots, \widehat{C}^{k'})$ that is a refinement, we know that for any cluster $i\in[k']$ each points will be assigned the same center, i.e., for all $\vectx\in \widehat{C}^i$, the value $c^{\vectmu_1\ldots, \vectmu_k}(\vectx)$ is the same. 
  Thus, 
  $$\surcost^{\vectmu^1,\ldots,\vectmu^k}(T) =
  \sum_{j=1}^{k'} \sum_{\vectx\in\widehat{C}^j} \|\vectx - c^{\vectmu_1\ldots, \vectmu_k}(\vectx)\|_2^2=
   \sum_{\vectx} \|\vectx - c^{\vectmu_1\ldots, \vectmu_k}(\vectx)\|_2^2,$$ which is exactly equal to the $k$-means reference clustering.
\end{proof}

\begin{corollary}\label{cor:surrogate_cost_after_n_stpss}
If {\em \exkmc{}} builds a tree with $n$ leaves, then it exactly matches the reference clustering, and the surrogate cost is equal to the reference cost.
\end{corollary}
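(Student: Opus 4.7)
The plan is to reduce the corollary to Theorem~\ref{thm:algorithm_refinement}. The central observation is that if \exkmc{} manages to build a tree $T$ with $n$ leaves on a dataset $\mathcal{X}$ of $n$ points, then each leaf must contain exactly one data point. To see this, first note that every leaf of the growing tree is non-empty: when \exkmc{} splits a node, a split into an empty child yields no improvement in the surrogate cost (since $\surcost_{\centersSet}(\emptyset) = 0$ and the cost of the non-empty side equals the parent's cost), so such a split can be avoided and, in any event, would leave a non-empty leaf. Given $n$ non-empty leaves partitioning $n$ points, the pigeonhole principle forces each leaf to be a singleton.

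Once each leaf is a singleton $\{\vectx\}$, the clustering induced by $T$ is the trivial partition of $\mathcal{X}$ into singletons. Every such partition is automatically a refinement of any $k$-clustering of $\mathcal{X}$, and in particular it refines the reference clustering $C$ obtained by assigning each point to its closest reference center in $\centersSet = \{\vectmu^1, \ldots, \vectmu^k\}$. Applying Theorem~\ref{thm:algorithm_refinement} directly, the clustering induced by $(T, \ell)$ with the surrogate-cost labeling equals $C$.

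Finally, since the two clusterings coincide, the surrogate cost of $(T, \ell)$ equals the surrogate cost of $C$, which is by definition $\sum_{\vectx \in \mathcal{X}} \min_{i \in [k]} \|\vectx - \vectmu^i\|_2^2$, the reference cost. The main subtlety, which I expect to be the only point worth being explicit about, is the singleton-per-leaf claim; apart from confirming that splits never need to produce empty children, the argument is a one-line application of Theorem~\ref{thm:algorithm_refinement}.
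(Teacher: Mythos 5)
Your proposal is correct and follows essentially the same route as the paper's proof: observe that a tree with $n$ leaves places each point in its own cluster, note that this singleton partition refines the reference clustering, and invoke Theorem~\ref{thm:algorithm_refinement} to conclude. Your explicit pigeonhole/non-empty-leaf justification is a detail the paper glosses over, but it does not change the argument.
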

\begin{proof}
If there is a step such that $T$ is a refinement of $C^{\vectmu^1,\ldots,\vectmu^k}$, then, using Theorem~\ref{thm:algorithm_refinement}, the corollary holds. 
Using Theorem~\ref{thm:surrogate-non-increasing}, the tree will continue to expand until either it is a refinement of $C^{\vectmu^1,\ldots,\vectmu^k}$ or there are no more points and the tree contains $n$ leaves. A clustering defined by $n$ leaves is one  
where each point is in a cluster of its own, which is a refinement of $C^{\vectmu^1,\ldots,\vectmu^k}.$ More specifically, for the tree $T$ that contains $n$ leaves, we see that the surrogate cost is equal to  
$$\surcost^{\vectmu^1,\ldots,\vectmu^k}(T) =  \sum_{\vectx} \|\vectx - c^{\vectmu_1\ldots, \vectmu_k}(\vectx)\|_2^2.$$ This is exactly equal to the $k$-means reference clustering.
\end{proof}

\begin{proof}[Proof of Theorem~\ref{thm:worst_case}.]
Denote by $T$ the output threshold tree of the \imm{} algorithm and by $T^{k'}$ the output threshold tree of Algorithm~\ref{algo:exkmc}. We want to show that $$\cost(T^{k'})\leq O(k^2)\cdot\cost(opt_k).$$

The proof of Theorem~3 in \cite{dasgupta2020explainable} shows that 
$\surcost(T)\leq O(k^2)\cdot\cost(opt_k).$ Together with Theorem~\ref{thm:surrogate-non-increasing}, we know that the surrogate cost is non-increasing, thus 
$$\surcost(T^{k'})\leq O(k^2)\cdot\cost(opt_k).$$
As the surrogate cost upper bounds the $k$-means cost, we know that $\cost(T^{k'})\leq\surcost(T^{k'}).$
\end{proof}

\subsection{Hard-to-explain dataset}\label{apx:synthetic_II}
In \cite{dasgupta2020explainable} a dataset was shown that cannot have an $O(1)$-approximation with any threshold tree that has $k$ leaves. In this section we will show that there is a tree with only $O(k\log k)$ leaves the achieves the optimal clustering, even on this difficult dataset. Moreover, the $\exkmc{}$ algorithm outputs such a threshold tree.  


It will be useful later on to analyze the pairwise distances between the codewords. Using Hoeffding’s inequality we prove that the distances are very close to $d/2.$ 
\begin{proposition}[Hoeffding’s inequality] 
Let $X_1, ..., X_n$ be independent random variables, where for each $i$, $X_i\in[0,1]$. Define the random variable $X=\sum_{i=1}^n X_i.$ Then, for any $t\geq 0,$ $$\Pr(|X-\E[X]|\geq t)\leq 2e^{-\frac{2t^2}{n}}.$$
\end{proposition}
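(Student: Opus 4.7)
The plan is to follow the standard Chernoff–Cramér exponential moment approach. First I would reduce the two-sided bound to a one-sided bound: once I show $\Pr(X-\E[X]\geq t)\leq e^{-2t^2/n}$, applying the same argument to $-X_i$ (which also lies in a unit-length interval) gives the matching lower-tail bound, and a union bound yields the factor of $2$. So the core task is the one-sided statement.

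For the one-sided bound, I would introduce a free parameter $s>0$ and apply Markov's inequality to the nonnegative random variable $e^{s(X-\E[X])}$, obtaining
$$\Pr(X-\E[X]\geq t)\leq e^{-st}\,\E\bigl[e^{s(X-\E[X])}\bigr].$$
Because the $X_i$ are independent, the moment generating factor splits as $\prod_{i=1}^n \E[e^{s(X_i-\E[X_i])}]$, which reduces the problem to controlling the MGF of a single centered, bounded random variable.

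The main obstacle is the single-variable MGF bound, i.e., Hoeffding's lemma: if $Y$ is centered and takes values in an interval of length $1$, then $\E[e^{sY}]\leq e^{s^2/8}$. I would prove this by exploiting convexity of the exponential: for $Y\in[a,b]$ with $b-a\leq 1$, write $e^{sY}\leq \tfrac{b-Y}{b-a}e^{sa}+\tfrac{Y-a}{b-a}e^{sb}$, take expectations using $\E[Y]=0$, and reparametrize so the resulting upper bound becomes $\varphi(u)=-pu+\log(1-p+pe^u)$ for an appropriate $p\in[0,1]$ and $u=s(b-a)$. A direct computation shows $\varphi(0)=\varphi'(0)=0$ and $\varphi''(u)\leq 1/4$ uniformly, so Taylor's theorem gives $\varphi(u)\leq u^2/8$, which after substituting back yields $\E[e^{sY}]\leq e^{s^2(b-a)^2/8}\leq e^{s^2/8}$.

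With Hoeffding's lemma in hand, independence gives $\E[e^{s(X-\E[X])}]\leq e^{ns^2/8}$, hence
$$\Pr(X-\E[X]\geq t)\leq \exp\!\bigl(-st+ns^2/8\bigr).$$
The final step is to optimize over $s>0$; differentiating the exponent shows the minimum occurs at $s=4t/n$, which yields exactly $e^{-2t^2/n}$. Combining this with the symmetric lower-tail bound via the union bound gives $\Pr(|X-\E[X]|\geq t)\leq 2e^{-2t^2/n}$, completing the proof.
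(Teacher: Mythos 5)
Your proof is correct and complete: the reduction of the two-sided bound to the one-sided case, the Chernoff/Markov step, the factorization of the moment generating function by independence, Hoeffding's lemma via convexity of the exponential and the bound $\varphi''(u)\leq 1/4$, and the optimization at $s=4t/n$ all check out and yield exactly the stated bound $2e^{-2t^2/n}$. Note that the paper does not prove this proposition at all—it quotes Hoeffding's inequality as a standard tool and immediately applies it in Claim~\ref{clm:synthetic_II_pairwise_distances}—so there is no in-paper argument to compare against; what you have written is the canonical textbook proof and would serve as a self-contained justification if one were wanted.
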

\begin{claim}\label{clm:synthetic_II_pairwise_distances}
Let $\delta\in(0,1)$. With probability at least $1-\delta$ over the choice of $k$ random vectors in $\{-1,1\}^d$, all the pairwise distances between the vectors are $\frac{d}2\pm2\sqrt{d}\ln\frac{k}{2\delta}.$
\end{claim}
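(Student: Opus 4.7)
The plan is a textbook Hoeffding-plus-union-bound argument, so there is no real conceptual obstacle: the only thing to watch is that the tail bound comfortably absorbs the $\binom{k}{2}$ factor lost to the union bound.

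First, I would fix an arbitrary pair $i \neq j$ and express the Hamming distance between the codewords as
\[
H_{ij} \;=\; \sum_{\ell=1}^{d} X_\ell, \qquad X_\ell \;=\; \mathbb{1}[\vectv^i_\ell \neq \vectv^j_\ell].
\]
Because each of $\vectv^i_\ell$ and $\vectv^j_\ell$ is an independent uniform sample from $\{-1,+1\}$, each $X_\ell$ is a Bernoulli$(1/2)$ variable, and the $X_\ell$ are mutually independent across coordinates $\ell$. Hence $\E[H_{ij}] = d/2$. Any Euclidean-type pairwise distance used elsewhere in the analysis is a deterministic function of $H_{ij}$ (for instance $\|\vectv^i - \vectv^j\|_2^2 = 4H_{ij}$), so it suffices to control $H_{ij}$.

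Next, I would apply the Hoeffding inequality stated in the excerpt with $n = d$ and $t = 2\sqrt{d}\,\ln(k/2\delta)$, yielding
\[
\Pr\bigl(|H_{ij} - d/2| \geq 2\sqrt{d}\,\ln(k/2\delta)\bigr) \;\leq\; 2\exp\!\bigl(-8\ln^2(k/2\delta)\bigr).
\]
Since $\ln(k/2\delta) \geq 1$ in the parameter regime used in Theorem~\ref{thm:synthetic_II_upper_bound} (large $k$, small $\delta$), we have $8\ln^2(k/2\delta) \geq 8\ln(k/2\delta)$, so the right-hand side is bounded by $2(2\delta/k)^8$, which is vastly smaller than $\delta/\binom{k}{2}$. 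A union bound over the $\binom{k}{2}$ unordered pairs then leaves total failure probability at most $\delta$, and on the complementary event every pairwise Hamming distance lies in $d/2 \pm 2\sqrt{d}\,\ln(k/2\delta)$, which is exactly the claim.

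The only step that requires any care at all is the constant-chasing in the exponent above; everything else — the identification of $H_{ij}$ as a sum of i.i.d. Bernoulli$(1/2)$ variables, the invocation of Hoeffding, and the union bound — is entirely mechanical. In particular, there is no need to condition on the outcomes of any of the other codewords, since the independence across the pair suffices for each individual Hoeffding application.
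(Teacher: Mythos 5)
Your proposal is correct and follows essentially the same route as the paper's proof: both apply the stated Hoeffding inequality with $t = 2\sqrt{d}\ln\frac{k}{2\delta}$ to the (Hamming) distance of each pair and then union bound over all pairs, checking that $2e^{-8\ln^2(k/2\delta)}$ times the number of pairs is at most $\delta$. Your version is marginally more explicit about the Bernoulli decomposition and the condition $\ln\frac{k}{2\delta}\geq 1$ needed for the final numerical comparison, but the argument is the same.
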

\begin{proof}
The average distance between any two random vectors is $d/2.$ From Hoeffding’s inequality and union bound, the probability that there is a pair with distance that deviates by more than $t=2\sqrt{d}\ln\frac{k}{2\delta}$ is bounded by 
$$k^2\cdot 2e^{-\frac{2t^2}{d}}\leq \delta.$$
\end{proof}


For the analysis of \exkmc{} on dataset $\cD$ we will use k-means++ to find the reference centers. For completeness, we write its pseudo-code and introduce the notion of distance of point $\vectp$ from a set of points $S$ as $$
\dist(\vectp,S) = \min_{\vectx \in S}\|\vectp - \vectx\|_2^2.
$$

\begin{figure}[!htb]
  \centering
  \begin{minipage}{.6\linewidth}
    \begin{algorithm}[H]
        \SetKwFunction{EX-Means}{EX-Means}
        \SetKwFunction{ExpandTree}{ExpandTree}
        \SetKwInOut{Input}{Input}\SetKwInOut{Output}{Output}\SetKwInOut{Preprocess}{Preprocess}
        \Input{
        	$\mathcal{X}$ -- Set of vectors in $\mathbb{R}^d$\\
        	$\;\;k$ -- Number of centers\\
          }
        \Output{
            $\centersSet$ -- Set of $k$ centers
        }
        
        \LinesNumbered
        \setcounter{AlgoLine}{0}
        \BlankLine
        
        $S \leftarrow \xfunc{set}()$\;
        \While {$\lvert S\rvert < k$}
        {
            sample $\vectp\in \cX$ with probability $\frac{\mathsf{dist}(\vectp, S)}{\sum_{\vectx \in \cX}  \mathsf{dist}(\vectx, S)}$\;
            $S.\mathtt{add}(\vectp)$
        }
        run Lloyd's algorithm with centers $S$\;
        \Return $S$\;
        \caption{$k$-means++}
        \label{algo:kmeans++}
        \end{algorithm}
    \end{minipage}
\end{figure}

In Claim~\ref{clm:lower_bound_dataset_centers_k_means} we will show that one iteration of Lloyd's algorithm is enough for convergence. 
Summarizing, we consider the following form of our algorithm, that uses $k$-means++ to find the reference centers and \imm{} algorithm for the base tree:
\begin{enumerate}
    \item Use the $k$-means++ sampling procedure to choose $k$ centers from the dataset. Letting $S^i$ denote the centers at step $i$, we add a new center by sampling a point $\vectp$ with probability 
    $$ 
    \frac{\mathsf{dist}(\vectp, S^i)}{\sum_{\vectx \in \cX}  \mathsf{dist}(\vectx, S^i)}.
    $$
    \item Use one iteration of Lloyd's algorithm, where we first recluster based on the centers in $S^k$, then take the means of the new clusters $\mathcal{M}$ as the $k$ reference centers.
    \item Run the \imm{} algorithm with reference centers $\mathcal{M}$ to construct a base tree with $k$ leaves.
    \item Expand the tree to $k' > k$ leaves using \exkmc{} with reference centers $\mathcal{M}$ and the \imm{} tree. 
\end{enumerate}



We first prove that after using the $k$-means++ initialization and one iteration of Lloyd’s algorithm, the resulting centers are the optimal centers, i.e., the codewords. 
\begin{claim}\label{clm:lower_bound_dataset_centers_k_means}
When running $k$-means++ on $\cD$, with probability at least $1-O(\frac{k\log k}{d})$ the resulting centers are the optimal ones.
\end{claim}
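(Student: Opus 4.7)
The plan is to condition on the event from Claim~\ref{clm:synthetic_II_pairwise_distances} (applied with $\delta = 1/d$) that every pairwise codeword distance lies in $d/2 \pm O(\sqrt{d}\log d)$; this event holds with probability $1 - O(1/d)$ and will be absorbed into the final bound. Under this event I first compute the two kinds of squared distances that drive the $k$-means++ analysis. Two distinct points of the same cluster $C^{\vectv^i}$ agree with $\vectv^i$ except each has a zero in a single (different) coordinate, so their squared distance is exactly $2$. Two points from different clusters $C^{\vectv^i}, C^{\vectv^j}$ disagree in every coordinate where $\vectv^i$ and $\vectv^j$ disagree (each contributing $4$), plus at most $2$ more from the zero coordinates, yielding squared distance $2d \pm O(\sqrt{d}\log d)$. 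Since the construction assumes $d > ck^2$, this cross-cluster distance is at least $d$ for $k$ large enough.

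Next I analyze the $k$-means++ sampling inductively. Call a sampling step \emph{successful} if the chosen point lies in a cluster that does not already contain a selected center; the first step is trivially successful. After $i$ successful steps, every non-center point in a visited cluster has $\dist(\cdot,S) \le 2$, while every point in an unvisited cluster has $\dist(\cdot,S) \ge d$. There are $i(d-1)$ points of the first type and $(k-i)d$ of the second, so the probability that step $i+1$ is unsuccessful is at most
\[
\frac{i(d-1)\cdot 2}{i(d-1)\cdot 2 + (k-i)d \cdot d} \;\le\; \frac{2i}{(k-i)d}.
\]
A union bound over $i=1,\ldots,k-1$ yields total failure probability at most
\[
\frac{2}{d}\sum_{i=1}^{k-1}\frac{i}{k-i} \;=\; \frac{2}{d}\sum_{j=1}^{k-1}\frac{k-j}{j} \;=\; O\!\left(\frac{k\log k}{d}\right),
\]
establishing that with the claimed probability the sampling phase selects exactly one center from each of the $k$ clusters.

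Finally, conditional on one sampled center per cluster, the single Lloyd reclustering step reassigns each $\vectx \in C^{\vectv^i}$ back to its own cluster, since the in-cluster center is at squared distance $\le 2$ while every other center is at squared distance $\ge d$. Averaging then recovers the exact cluster means $(1-1/d)\vectv^i$, which are the optimal centers for the true clustering. Combined with the failure probability of the Hoeffding event, the total failure probability is $O(1/d) + O(k\log k/d) = O(k\log k/d)$, proving the claim. The main obstacle is bookkeeping two independent sources of randomness (the random codewords and the $k$-means++ draws), and verifying that the Hoeffding slack $O(\sqrt d \log k)$ is dominated by the gap between within- and cross-cluster squared distances; the hypothesis $d > ck^2$ is exactly what enables both the separation bound and the clean harmonic-sum estimate $O(k\log k/d)$.
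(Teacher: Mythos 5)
Your proposal is correct and follows essentially the same route as the paper: bound the probability that a $k$-means++ sampling step lands in an already-covered cluster by the ratio of within-cluster distances (at most $2$ per point) to cross-cluster distances (order $d$ per point), sum the resulting $\frac{2i}{(k-i)d}$ terms into the harmonic bound $O(k\log k/d)$, and then note that one Lloyd iteration recovers the optimal clustering. Your version is slightly more explicit than the paper's in two places --- you invoke Claim~\ref{clm:synthetic_II_pairwise_distances} to justify the cross-cluster separation (the paper simply cites the $\alpha d$ separation built into the dataset definition) and you compute the post-Lloyd means as $(1-1/d)\vectv^i$ --- but these are refinements of the same argument, not a different one.
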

\begin{proof}
We will show that with probability at least $1-O(\frac{k\log k}{d})$, after the initialization step, the $k$-means++ algorithm took one point from each cluster. This will imply that the the optimal clusters have been found, and the optimal centers will be returned after the next step in the $k$-means++ algorithm, one iteration of Lloyd’s algorithm.   

We prove by induction on $|S|=i$ that with probability at most $\sum_{j=1}^i\frac{2j}{\alpha d(k-j)}$, the variable $S$ does not contain $i$ points from $i$ different codewords, where $\alpha d\geq d/4$ is a lower bound to the codewords pairwise distances. The base of the induction is trivial. Suppose that $|S|=i$, where $1<i<k,$ and $S$ contains points from $C^{\vectv^1},\ldots,C^{\vectv^i}.$
The distance from $S$ of each point $\vectp$ that its cluster was taken already, $\vectp\in \cup_{j=1}^iC^{\vectv^j}$, is at most $\dist(\vectp,S)\leq 2$. 
There are at most $di$ such points. 
The distance of points not taken is at least $\alpha d$. There are at least $d(k-i)$ such points.
Thus, using union bound, the probability to take a point from $\cup_{j=1}^iC^{\vectv^j}$ is at most $$\frac{2di}{\alpha d^2(k-i)}=\frac{2}{\alpha d}\cdot\frac{i}{k-i}.$$
We use the induction hypothesis and show that in total the probability to return $i+1$ points not all from different clusters is at most $$\frac{2}{\alpha d}\sum_{j=1}^i\frac{j}{k-j}.$$

Specifically, at the end, the error probability is bounded by $$
\frac{2}{\alpha d}\sum_{i = 1}^{k-1} \frac{i}{k-i} 
\leq
\frac{2}{\alpha d}\sum_{i = 1}^{k-1} \frac{k}{k-i}
=
\frac{2k}{\alpha d} \sum_{i = 1}^{k-1} \frac{1}{i}
=
O\left(\frac{k \log k}{d}\right)
$$ 
\end{proof}

In the next claim we show that if we run \imm{} with the codewords as the reference centers, then the cost is bounded by $O(dk\log k).$ The \imm{} algorithm can get the codewords as centers by running the $k$-means++ algorithm and use the previous claim.
\begin{claim}
The output threshold tree of the {\em \imm{}} algorithm on $\cD$ with the $k$ codewords as the set of reference centers, has $k$-means cost $O(dk\log k).$
\end{claim}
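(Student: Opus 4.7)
The plan is to upper bound the $k$-means cost of the IMM tree on $\cD$ by the surrogate cost taken with respect to the codewords themselves as the $k$ centers; this upper bound is valid because for any leaf its empirical mean minimizes its contribution to the sum of squared distances. I then bound the surrogate cost by counting \emph{mistakes}, where a mistake is a data point whose final leaf is labeled with a codeword different from its true one.

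The first step is geometric: by Claim~\ref{clm:synthetic_II_pairwise_distances}, with high probability every pair of codewords is at squared distance $d/2 \pm O(\sqrt{d\log k}) = O(d)$. Every dataset point $\vectp \in C^{\vectv^j}$ differs from $\vectv^j$ in exactly one coordinate, so $\|\vectp - \vectv^{j'}\|_2^2 \leq 1 + \|\vectv^j - \vectv^{j'}\|_2^2 = O(d)$ for any other codeword $\vectv^{j'}$. Hence each mistake contributes at most $O(d)$ to the surrogate cost, while each non-mistake contributes exactly $1$.

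The second step is combinatorial: bound the total number $M$ of mistakes. Since every coordinate of every data point lies in $\{-1,0,1\}$, the only non-trivial thresholds on coordinate $i$ are $\theta \in [-1,0)$ and $\theta \in [0,1)$. At an internal node $v$ containing $k_v$ codewords, let $k^+_{i,v}$ (resp.\ $k^-_{i,v}$) be the number of codewords in $v$ whose $i$-th coordinate equals $+1$ (resp.\ $-1$). A point $\vectp \in C^{\vectv^j}$ whose zero coordinate is $i' \neq i$ agrees with $\vectv^j$ on coordinate $i$ and therefore follows its codeword through this split. Hence the only points that can be \emph{newly} separated at $v$ are the at most $k_v$ points whose zero coordinate equals $i$ and whose true codeword lies in $v$; by choosing the better of the two thresholds, at most $\min(k^+_{i,v},k^-_{i,v})$ such separations occur, which is exactly the size of the smaller child of $v$ measured in codewords. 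Since IMM minimizes mistakes at every split, the tree $T$ it produces satisfies
\[
M \ \leq\ \sum_{v\text{ internal in }T} \min(a_v,b_v),
\]
where $a_v,b_v$ are the codeword counts of $v$'s two children. A simple induction using the worst-case recurrence $f(n) = n/2 + 2f(n/2)$ shows that $\sum_v \min(a_v,b_v) \leq n\log_2 n$ for any binary tree with $n$ leaves, so $M = O(k\log k)$.

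Putting the pieces together, the surrogate cost with respect to the codewords is at most $(dk - M)\cdot 1 + M\cdot O(d) = O(dk) + O(dk\log k) = O(dk\log k)$, and since this upper bounds the $k$-means cost, the claim follows. The main subtlety will be the case analysis in the second step: one must verify that if coordinate $i$ has already been used on the root-to-$v$ path, the corresponding zero-coordinate-$i$ points have either already been counted as mistakes at that earlier split or still accompany their codeword into $v$, so that the bound $\min(k^+_{i,v},k^-_{i,v})$ correctly counts the \emph{new} mistakes without double counting any old ones.
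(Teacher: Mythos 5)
Your proof is correct and shares the paper's overall decomposition --- bound the $k$-means cost by the surrogate cost with the codewords as fixed centers, split points into mistakes and non-mistakes, charge $O(d)$ per mistake and $O(dk)$ total for non-mistakes, and show there are only $O(k\log k)$ mistakes --- but you bound the number of mistakes by a genuinely different argument. The paper argues level by level: each codeword survives in at most one node per level and causes at most one new mistake there (the point whose zeroed coordinate is the split coordinate), giving $O(k)$ mistakes per level, and then invokes the $O(\log k)$ depth bound for the \imm{} tree from the prior work. You instead bound the new mistakes at each internal node $v$ by $\min(a_v,b_v)$, the codeword count of the smaller child (using that \imm{} minimizes mistakes, so its chosen split is no worse than the better of the two thresholds on the chosen coordinate), and then apply the standard ``smaller-half'' charging argument to get $\sum_v \min(a_v,b_v)\leq k\log_2 k$ over any binary tree with $k$ leaves. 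Your route is self-contained --- it does not need the external depth bound and would survive even if the tree were unbalanced --- at the cost of a slightly more delicate per-node accounting; the subtlety you flag at the end (points whose zero coordinate was already used higher up) resolves exactly as you say, since such a point is either already a mistake or still travels with its codeword, so no double counting occurs. One cosmetic note: your invocation of the pairwise-distance claim to bound a mistake's contribution is unnecessary, since any point and any codeword are trivially at squared distance at most $4d$.
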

\begin{proof}
The work \cite{dasgupta2020explainable}  defines the mistakes of an inner node $u_{\theta,i}$ as follows: If $u_{\theta,i}$ defines a split using feature~$i$ and threshold $\theta$, a mistake is a point $\vectp$ such that $\vectp$ and its center $\vectv^j$ both reach $u_{\theta,i}$ but then they become separated and go different directions, i.e.,
$$(p_i \leq \theta \mbox{  and  } v^j_i > \theta) \quad \mbox{  or  } \quad (p_i > \theta \mbox{ and } v^j_i \leq \theta).$$   
The cost of the \imm{} tree is composed of points that are mistakes and those that are not. All the points that are not mistakes contribute in total $O(dk)$ to the cost. Each point that is a mistake contributes at most $d$ to the cost. We will show that there are only $O(k\log k)$ mistakes and this completes the proof.

We consider each level of a tree, where a {\em level} is a set of nodes with the same number of edges on the path from the root. Due to the definition of the \imm{} algorithm, each center survives in at most one node in any level. Each center $\vectv^j$ at a node $u_{\theta,i}$ can cause at most one mistake, the point in $C^{\vectv^j}$ with a zero in the $i$-th coordinate.
Thus at each level there are only $O(k)$ mistakes. From \cite{dasgupta2020explainable}, Section B.5, we know that the depth is $O(\log k).$ Thus, the total number of mistakes in the \imm{} tree is $O(k\log k)$ and the total cost is $O(dk\log k).$
\end{proof}

Putting these results together, we are now ready to prove that our algorithm produces the optimal clustering on the Synthetic II dataset $\cD$ using a tree with $O(k \log k)$ leaves.

\begin{proof}[Proof of Theorem~\ref{thm:synthetic_II_upper_bound}.]
We generalize the notion of a mistake introduced in \cite{dasgupta2020explainable}. A mistake in a leaf $u$ is a point in the leaf that its center is not $\ell(u)$, the labeling of $u$.
Note that the number of mistakes in \imm{} upper bounds the number of mistakes in the leaves at the beginning of $\exkmc$ run, which implies that at the beginning there are only $O(k\log k)$ mistakes in the leaves.

We will show that at each iteration of $\exkmc$, either there are no mistakes in the leaves or there is a split that will not incur more mistakes and actually reduce the number of mistakes. We will show that this will imply that the surrogate cost decreases by $\Omega(d)=c_1d$, for some constant $c_1>0$. 
A split that does not change the number of mistakes, or even increase it will have a smaller gain. 
Thus, number of mistakes will guarantee to decrease. This will immediately prove that after $O(k\log k)$ iterations of $\exkmc{}$ the clustering is the one defined by $\centersSet.$ We want to prove something stronger, we want to analyze the trade-off between complexity (number of leaves added) and accuracy (the surrogate cost).  

Let us analyze the approximation ratio as a function of the number of leaves added to the tree. 
If there are no mistakes, then we have found the optimal clustering. Otherwise, the total surrogate cost after \imm{} tree is $O(dk\log k)=c_2\cdot dk\log k$, for some constant $c_2$, and each new leaf will decrease the cost by $c_1\cdot d$.
Together with the optimal cost being $\Theta(kd)=c_3kd$, for some constant $c_3>0$, we deduce that if there are $k'$ leaves (equivalently, $k'-k$ iterations of $\exkmc$) and there are still mistakes, the approximation ratio is bounded by 
$$\frac{c_2dk\log k-c_1d(k'-k)}{c_3 dk}=\frac{c_2}{c_3}\cdot\log k -\frac{c_1}{c_3}\cdot\frac{k'}{k}+\frac{c_1}{c_3} $$
In different words, there is a constant $c>0$ such that the approximation ratio is bounded by $$O\left(\max\left\{\log k -c\cdot \frac{k'}{k},\ 1\right\}\right).$$


Let us prove that if there is a mistake in a leaf, then some split decreases the surrogate cost by $\Omega(d).$
If there is  leaf $u$ with a point $\vectp$ that is a mistake, then $\vectp$ does not belong to the codeword $v^{\ell(u)}.$ 
There is a feature $i$ where $p_i\neq 0$ and $p_i\neq v^{\ell(u)}_i$ (actually there are $\alpha d-1$ such coordinates, and any one of them will be good). Without loss of generality, assume that $p_i=-1$. 
Focus on the split $(i,\theta)$ with $\theta=-0.5.$ 
Denote by $C^{\vectp},C^{\neg\vectp}$ the partition of points in $u$ defined by this split. Suppose that $\vectp \in C^{\vectp}.$
Denote by $C^{\ell(u)}$ all the points in $u$ that belong to the cluster $\ell(u).$
This choice of threshold ensures that all points in $C^{\ell(u)}$ are in $C^{\neg\vectp}$ and not in $C^{\vectp}$. 
This means that all points in $C^{\ell(u)}$ are now in a different cluster than $\vectp$ after this split. In different words, all points that were not mistakes as they were in $C^{\ell(u)}$ will remain as non-mistakes. 
Furthermore, this split will make $\vectp$ a non-mistake.
In $C^{\vectp}$ there are at most $O(k\log k)$ points (because only mistakes can be in $C^{\vectp}$). 
Using Claim~\ref{clm:synthetic_II_pairwise_distances}, their cost from changing the center can increase by at most $O(\sqrt{d}\log k)$,   
and they decrease the surrogate cost by $\Omega(d).$ 
Thus the decrease in this split is at least $\Omega(d-\sqrt{d}k\log^2k)=\Omega(d).$
As the $\exkmc$ algorithm chooses the split minimizing the surrogate cost, the cost decreases by $\Omega(d)$ in this step.  

The last thing we need to show is that for every split with $\Omega(d)$ gain, the number of mistakes must go down. 
Suppose the algorithm made a split at node $u$ where previously there were $n_u$ points and $m_u$ mistakes. After the split there are $n_r$ nodes on the right successor and $n_\ell$ on the left, $n_r+n_\ell=n_u$ and $m_r$ mistakes on the right and $m_\ell$ mistakes on the left. 

Letting $\Delta=2\sqrt{d}\ln\frac{k}{10}=\Theta(\sqrt{d})$, the cost of each node is, by Claim~\ref{clm:synthetic_II_pairwise_distances}, $$\# \text{mistakes}\cdot \left(\frac{d}{2}\pm\Delta\right) + \#\text{non-mistakes}.$$
Thus, the maximal gain of a split is achieved in case $u$ has the largest cost and its successors the smallest. Specifically, the maximal gain is 
$$\left[m_u\left(\frac{d}{2}+\Delta\right)+(n_u-m_u)\right]-\left[m_r\left(\frac{d}{2}-\Delta\right)+(n_r-m_r)\right]-\left[m_\ell\left(\frac{d}{2}-\Delta\right)+(n_\ell-m_\ell)\right].$$
The last term is equal to $$(\Delta-1)(m_u-m_r-m_\ell).$$
For the gain to be $\Omega(d)$, or even positive, the number of mistakes must decrease.
\end{proof}


\section{More Experimental Details}
\label{sec:appendix_setup}

\subsection{Algorithms and Baselines}

Our empirical evaluation compared the following clustering methods.

\begin{itemize}
    \item CART~\cite{breiman1984classification}: Each data point was assigned with a class label, based on the clustering result of the near-optimal baseline. We have used \sklearn{} implementation of decision tree minimizing the \texttt{gini} impurity. Number of leaves was controlled by \texttt{max\_leaf\_nodes} parameter. 
    
    \item KDTree~\cite{bentley1975multidimensional}: For each tree node the best cut was chosen by taking the coordinate with highest variance, and split according to the median threshold. The size of the constructed tree is controlled through \texttt{leaf\_size} parameter, since the splits are always balanced we obtained a tree with up to $k'$ leaves by constructing a KDTree with $\texttt{leaf\_size}=\lfloor \frac{n}{k'} \rfloor$. Each of the $k'$ leaves was labeled with a cluster id from $1$ to $k$, such the $\surcost$ will be minimized with the centers of the near-optimal baseline.
    
    \item CUBT \cite{fraiman2013interpretable}: Constructed clustering tree using \cubt{} R package \cite{cubtR}. \cubt{} algorithm is composed out of three steps: (1) build a large tree, (2) prune branches, and (3) join tree leaves labels. Each step is controlled by multiple parameters. For each step we applied grid-search over multiple parameters, and selected the parameters that minimize the final tree $\cost$. The hyper-parameters were chosen  based on \cite{fraiman2013interpretable} recommendation, and available in \ref{sec:appendix_hyper_param}. To construct a tree with $k'$ leaves the we have set $\texttt{nleaves}=k'$ in the prune step, and to verify that only $k$ clusters will be constructed we have set $\texttt{nclass}=k$ in the join step.
    
    \item CLTree \cite{liu2005clustering}: Constructed clustering tree using \cltree{} Python package \cite{cltreeP}.
    \cltree{} algorithm first construct a large tree with up to \texttt{min\_split} samples in each leaf, and afterwards prune its branches. Pruning step is controlled by \texttt{min\_y} and \texttt{min\_rd} parameters. We applied grid-search over those parameters (values specified in \ref{sec:appendix_hyper_param}), for each combination we counted the number of leaves, and for each number of leaves we have taken the tree with minimal $\cost$. 
    
    \item ExKMC: Applied our proposed method for tree construction (Algorithm \ref{algo:exkmc}), that minimize $\surcost$ at each split with the centers of the near-optimal baseline.
    
    \item ExKMC (base: IMM): Constructed a base tree with $k$ leaves according to the \imm{} algorithm \cite{dasgupta2020explainable}. The tree was expanded with our proposed method (Algorithm \ref{algo:exkmc}) that minimize $\surcost$ at each split. Both \imm{} and the \exkmc{} method used the centers of the near-optimal baseline.
\end{itemize}

For each combination of parameters, if execution time was more than 1 hour, the execution was terminated and ignored. Execution termination occurred only with \cubt{} and \cltree{} over the larger datasets.

\subsection{Datasets}
\label{sec:appendix_datasets}
Datasets in the empirical evaluation are depicted in Table \ref{tab:datasets}.

\begin{table}[!htb]
    \centering
    \caption{Datasets properties}
    \label{tab:datasets}
    \begin{tabular}{l l l l}
    \toprule
    Dataset & $k$ & $n$ & $d$ \\
    \midrule
    \multicolumn{4}{c}{\textbf{Small Datasets}}\\
    \midrule
    Iris \cite{fisher1936use}& 3 & 150 & 4 \\
    Wine \cite{Dua:2019}& 3 & 178 & 13 \\
    Breast Cancer \cite{Dua:2019}& 2 & 569 & 30 \\
    Digits \cite{lecun1998gradient}& 10 & 1,797 & 64 \\    
    Mice Protein \cite{higuera2015self} & 8 & 1,080 & 69 \\
    Anuran Calls \cite{mendoza2019morphological} & 10 & 7,195 & 22 \\
    \midrule
    \multicolumn{4}{c}{\textbf{Larger Datasets}}\\
    \midrule
    Avila \cite{de2018reliable} & 12 & 20,867 & 10 \\
    Covtype \cite{blackard1999comparative}& 7 & 581,012 & 54 \\
    20 Newsgroups \cite{joachims1996probabilistic}& 20 & 18,846 & 1,893 \\
    CIFAR-10 \cite{krizhevsky2009learning}& 10 & 50,000 & 3,072 \\
    \midrule
    \multicolumn{4}{c}{\textbf{Synthetic Datasets}}\\
    \midrule
    Synthetic I & 3 & 5,000 & 1,000 \\
    Synthetic II & 30 & 30,000 & 1,000 \\
    \bottomrule
    \end{tabular}
\end{table}

Categorical values and class labels were removed from the datasets. The number of clusters, $k$, was set to be equal to the number of class labels.
For the 20newsgroups dataset, documents were converted to vectors by removal English stop-words and construction of word count vectors, ignoring rare terms (with document frequency smaller than $1\%$).

\smallpar{Synthetic I dataset} The dataset of synthetic I is a slight adaptation of the one described in \cite{dasgupta2020explainable}, which was designed the highlight the weakness of \cart{} algorithm. It contains $5,000$ points:
\begin{itemize}
    \item Two of them are $(\nu,1,1,\ldots, 1)$ and $(\nu, 0,0,\ldots,0),$ where $\nu=1000$ is a large number.
    \item  Half of the remaining points are $0$ in the first feature and another random $100$ features are also $0$, all the remaining features are $1.$
    \item  The remaining points also have zero in the first feature, and the other random $100$ features are set to~$1$ and the rest of the features are $0$. 
\end{itemize}

Properties of synthetic II are described in  Section~\ref{apx:synthetic_II}.

\subsection{Hyper-parameters}
\label{sec:appendix_hyper_param}

Grid search was executed on the following hyper-parameters values:
\begin{itemize}
    \item CUBT:\\
    $\texttt{minsize} \in [5, 10, 15]$\\
    $\texttt{mindev} \in [0.001, 0.7, 0.9]$\\
    $\texttt{mindist} \in [0.3, 0.5]$\\
    $\texttt{alpha} \in [0.2, 0.4, 0.6]$\\
    To construct a tree with $k'$ leaves the we have set $\texttt{nleaves}=k'$ in the prune step, and to verify that only $k$ clusters will be constructed we have set $\texttt{nclass}=k$ in the join step.
    
    \item CLTree:\\
    $\texttt{min\_split} \in [10, 20, 30, 50, 100, 200, 500, 1000, 1500, 2000]$ \\
    $\texttt{min\_y} \in [0.1, 0.2, 0.3, 0.4, 0.5, 1, 1.5, 2, 2.5, 3, 3.5, 4, 4.5, 5]$\\
    $\texttt{min\_rd} \in [1, 2, 3, 4, 5, 6, 7, 8, 9, 10, 15, 20, 25, 30, 50]$
\end{itemize}


\subsection{Extra experiments}
\label{sec:appendix_extra_exp}
\label{sec:appendix_runtime}

Figure \ref{fig:runtime_single_process} depicted the run time of \exkmc{} constricting a tree with $2k$ leaves using a single process. 
The operations of \imm{} and \exkmc{} can be feature-wise paralleled, where \texttt{KMeans} iterations can also be executed in parallel.  
Figure \ref{fig:appendix_run_time} compare the running times of single processor and four processor. Using four process \exkmc{} constructs tree with $20$ leaves over CIFAR-10 dataset in less than $7$ minutes. 

\begin{figure}[!htb]
    \centering
    \begin{subfigure}{.5\linewidth}
    \centering
    \includegraphics[width=.8\linewidth]{figures/run_time_10_raw.png}
    \caption{Single process}
    \label{fig:run_time_single}
    \end{subfigure}%
    \begin{subfigure}{.5\linewidth}
    \centering
    \includegraphics[width=.8\linewidth]{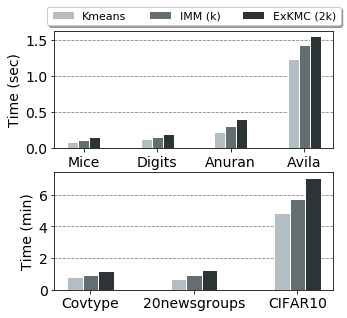}
    \caption{Four processes}
    \label{fig:run_time_njobs4}
    \end{subfigure}
    \caption{Running times of constructing a tree-based clustering with $2k$ leaves. We consider both a single processor (left) and a parallel version using four processes (right). The y-axis labels differ between left and right graphs. Overall, parallelism improves the running time of \exkmc{} by 2--3$\times$.}
    \label{fig:appendix_run_time}
\end{figure}

\label{sec:appendix_experiments_small_scaled}


\begin{figure}[!htb]
        \centering
        \includegraphics[width=.7\linewidth]{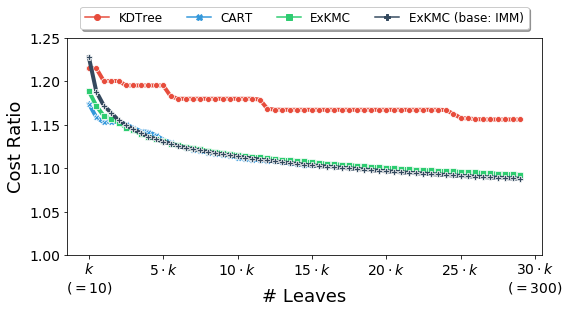}
        \caption{{\bf CIFAR-10 Convergence Rate.} We separately showcase CIFAR-10 because it is an exception to many trends. The algorithms fail to converge to a cost ratio of 1.0 compared to the reference clustering. This is likely because pixels are not good features in this context, since the trees only use a subset of them, one at a time. We also notice that IMM starts with the worst performance, but when expanded using \exkmc{}, eventually outperforms the competitors (for $k' > 10k$). }
        \label{fig:cifar10_30k}
\end{figure}

\begin{figure}
    \begin{tabular}{c c c c}       
        \multicolumn{4}{l}{\textbf{Zooming in on the Cost for Small Datasets}}\\
        \begin{subfigure}{.24\linewidth}
        \centering
        \includegraphics[width=\linewidth]{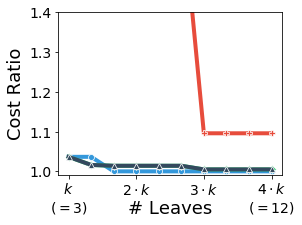}
        \caption{\scriptsize{Iris}}
        \label{fig:iris_scale}
        \end{subfigure}& 
        \begin{subfigure}{.24\linewidth}
        \centering
        \includegraphics[width=\linewidth]{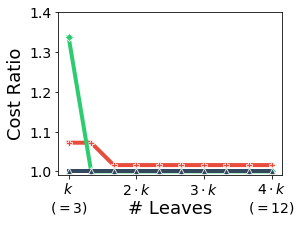}
        \caption{\scriptsize{Wine}}
        \label{fig:wine_scale}
        \end{subfigure}& 
        \multicolumn{2}{c}{
         \begin{subfigure}{.4\linewidth}
            \centering
           \includegraphics[width=\linewidth]{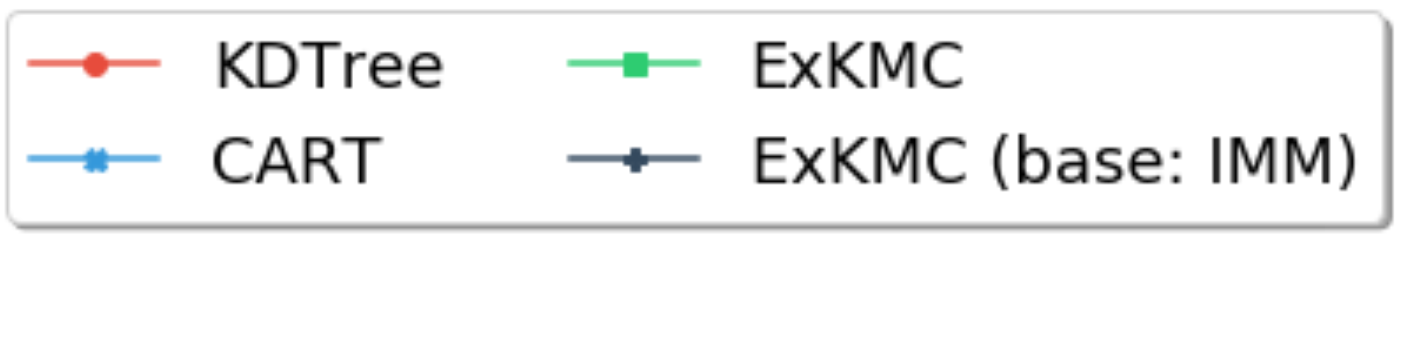}
        \end{subfigure}
        }\\
        \begin{subfigure}{.24\linewidth}
        \centering
        \includegraphics[width=\linewidth]{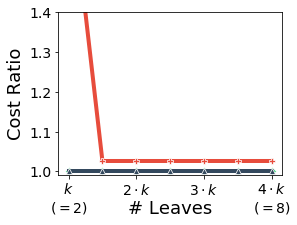}
        \caption{\scriptsize{Breast Cancer}}
        \label{fig:brease_cancer_scale}
        \end{subfigure}& 
        \begin{subfigure}{.24\linewidth}
        \centering
        \includegraphics[width=\linewidth]{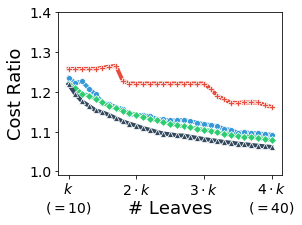}
        \caption{\scriptsize{Digits}}
        \label{fig:digits_scale}
        \end{subfigure}&
        \begin{subfigure}{.24\linewidth}
        \centering
        \includegraphics[width=\linewidth]{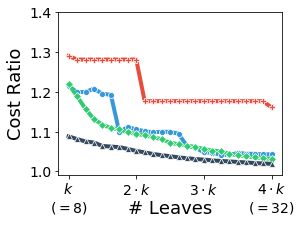}
        \caption{\scriptsize{Mice Protein}}
        \label{fig:mice_protein_scale}
        \end{subfigure}&
        \begin{subfigure}{.24\linewidth}
        \centering
        \includegraphics[width=\linewidth]{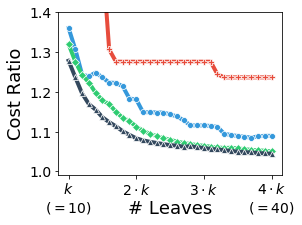}
        \caption{\scriptsize{Anuran}}
        \label{fig:anurn_scale}
        \end{subfigure}
    \end{tabular}
    \caption{
        Results of small datasets presented in Figure \ref{fig:experiments} where $\cost{}$ is in range $1.0 - 1.4$.
    }
    \label{fig:experiments_small_scaled}
\end{figure}

\begin{figure}
    \renewcommand{\arraystretch}{1.3}
    \begin{tabular}{c c c c}
        \rowcolor{gray!15}\multicolumn{4}{l}{\textbf{Small Datasets \scriptsize ($k$-means cost ratio, comparing five initializations)}}\\
        \begin{subfigure}{.24\linewidth}
        \centering
        \includegraphics[width=\linewidth]{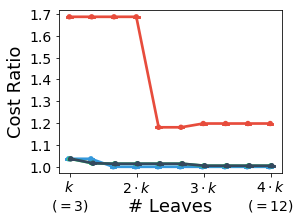}
        \caption{\scriptsize{Iris}}
        \label{fig:iris_agg}
        \end{subfigure}& 
        \begin{subfigure}{.24\linewidth}
        \centering
        \includegraphics[width=\linewidth]{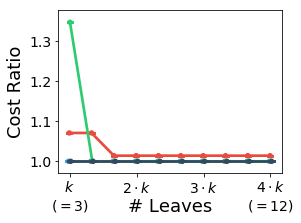}
        \caption{\scriptsize{Wine}}
        \label{fig:wine_agg}
        \end{subfigure}& 
        \multicolumn{2}{c}{
            \begin{subfigure}{.4\linewidth}
            \centering
            \includegraphics[width=\linewidth]{figures/legend_acc.PNG}
        \end{subfigure}
        }\\
        \begin{subfigure}{.24\linewidth}
        \centering
        \includegraphics[width=\linewidth]{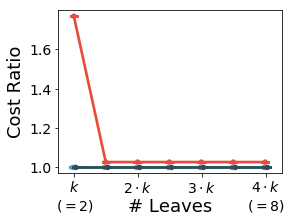}
        \caption{\scriptsize{Breast Cancer}}
        \label{fig:brease_cancer_agg}
        \end{subfigure}& 
        \begin{subfigure}{.24\linewidth}
        \centering
        \includegraphics[width=\linewidth]{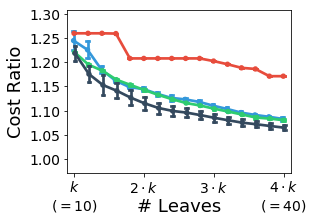}
        \caption{\scriptsize{Digits}}
        \label{fig:digits_agg}
        \end{subfigure}&
        \begin{subfigure}{.24\linewidth}
        \centering
        \includegraphics[width=\linewidth]{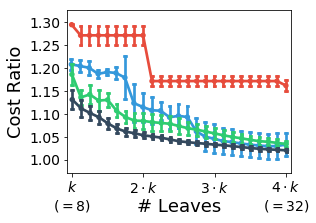}
        \caption{\scriptsize{Mice Protein}}
        \label{fig:mice_protein_agg}
        \end{subfigure}&
        \begin{subfigure}{.24\linewidth}
        \centering
        \includegraphics[width=\linewidth]{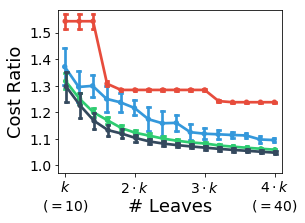}
        \caption{\scriptsize{Anuran}}
        \vspace{1ex}
        \label{fig:anurn_agg}
        \end{subfigure}\\
        \rowcolor{gray!15}\multicolumn{4}{l}{\textbf{Larger Datasets \scriptsize ($k$-means cost ratio, comparing five initializations)}}\\
        \begin{subfigure}{.24\linewidth}
        \centering
        \includegraphics[width=\linewidth]{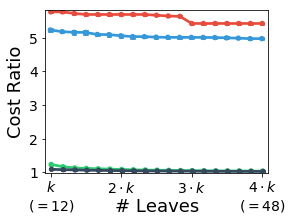}
        \caption{\scriptsize{Avila}}
        \label{fig:avila_agg}
        \end{subfigure}&
        \begin{subfigure}{.24\linewidth}
        \centering
        \includegraphics[width=\linewidth]{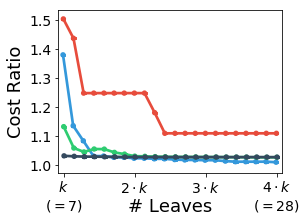}
        \caption{\scriptsize{Covtype}}
        \label{fig:covtype_agg}
        \end{subfigure}&
        \begin{subfigure}{.24\linewidth}
        \centering
        \includegraphics[width=\linewidth]{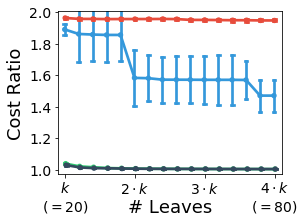}
        \caption{\scriptsize{20newsgroups}}
        \label{fig:20newsgroups_agg}
        \end{subfigure}&
        \begin{subfigure}{.24\linewidth}
        \centering
        \includegraphics[width=\linewidth]{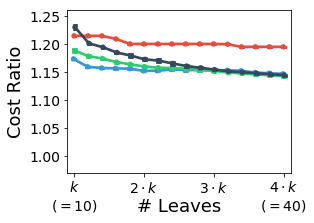}
        \caption{\scriptsize{CIFAR-10}}
        \vspace{1ex}
        \label{fig:cifar10_agg}
        \end{subfigure}\\
        \rowcolor{gray!15}\multicolumn{4}{l}{\textbf{Synthetic Datasets \scriptsize \scriptsize ($k$-means cost ratio, comparing five initializations)}}\\
        &
        \begin{subfigure}{.24\linewidth}
        \centering
        \includegraphics[width=\linewidth]{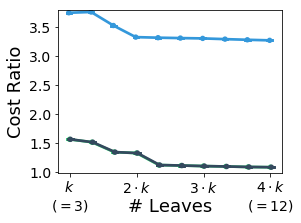}
        \caption{\scriptsize{Synthetic I}}
        \label{fig:syntactic1_agg}
        \end{subfigure}&
        \begin{subfigure}{.24\linewidth}
        \centering
        \includegraphics[width=\linewidth]{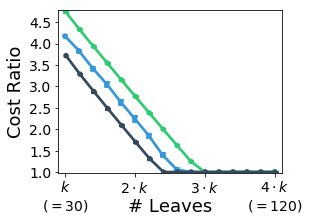}
        \caption{\scriptsize{Synthetic II}}
        \label{fig:syntactic2_agg}
        \end{subfigure}&\\
    \end{tabular}
    \caption{
        Aggregation of results presented in Figure \ref{fig:experiments} over five different executions of \texttt{KMeans}. All tree construction algorithms that were examined are deterministic, but different initialization of centers to \texttt{KMeans} may result in different reference clustering. Points are the median of the five executions, and error bars are the standard deviation. In many datasets, the deviations are negligible or even equal to $0$. When the number of leaves surpasses $2k$ the results of all algorithms remain separated in the presence of deviations, verifying the significance and reproducibility of the results.
    }
    \label{fig:experiments_error_bars}
\end{figure}

\begin{figure}[!ht]
    \centering
    \begin{subfigure}{.24\linewidth}
    \centering
    \includegraphics[width=\linewidth]{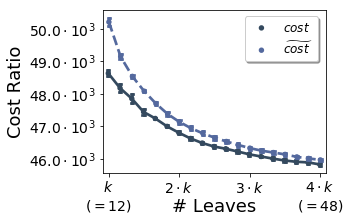}
    \caption{\scriptsize{Avila}}
    \label{fig:avila_sur_agg}
    \end{subfigure}%
    \begin{subfigure}{.24\linewidth}
    \centering
    \includegraphics[width=\linewidth]{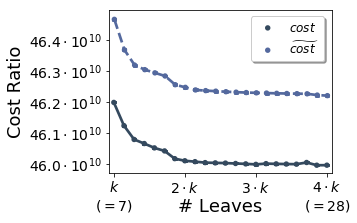}
    \caption{\scriptsize{Covtype}}
    \label{fig:covtype_sur_agg}
    \end{subfigure}%
    \begin{subfigure}{.24\linewidth}
    \centering
    \includegraphics[width=\linewidth]{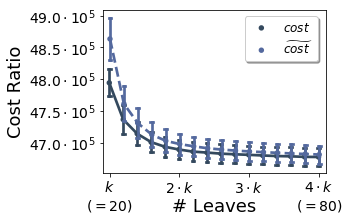}
    \caption{\scriptsize{20newsgroups}}
    \label{fig:20newsgroups_sur_agg}
    \end{subfigure}%
    \begin{subfigure}{.24\linewidth}
    \centering
    \includegraphics[width=\linewidth]{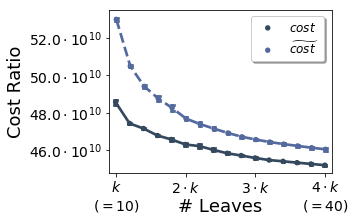}
    \caption{\scriptsize{CIFAR-10}}
    \label{fig:cifar10_sur_agg}
    \end{subfigure}%
    \caption{
    Aggregation of results presented in Figure \ref{fig:suroogate_vs_optimal} over five different executions of \texttt{KMeans}. For three of the datasets, the deviations are negligible. For 20newsgroups, we see that both the $k$-means and the surrogate cost have relatively higher variance. Overall, taking the best clustering of the five initialization leads to consistent and reproducible results.
    }
    \label{fig:suroogate_vs_optimal_agg}
\end{figure}

\label{sec:appendix_accuracy}


\begin{figure}
    \renewcommand{\arraystretch}{1.3}
    \begin{tabular}{c c c c}
        \rowcolor{gray!15}\multicolumn{4}{l}{\textbf{Small Datasets \scriptsize (Clustering accuracy)}}\\
        \begin{subfigure}{.24\linewidth}
        \centering
        \includegraphics[width=\linewidth]{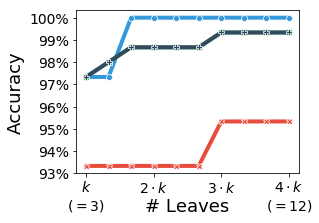}
        \caption{\scriptsize{Iris}}
        \label{fig:iris_acc}
        \end{subfigure}& 
        \begin{subfigure}{.24\linewidth}
        \centering
        \includegraphics[width=\linewidth]{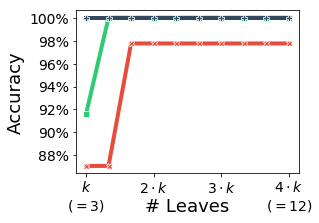}
        \caption{\scriptsize{Wine}}
        \label{fig:wine_acc}
        \end{subfigure}& 
        \multicolumn{2}{c}{
            \begin{subfigure}{.4\linewidth}
            \centering
            \includegraphics[width=\linewidth]{figures/legend_acc.PNG}
        \end{subfigure}
        }\\
        \begin{subfigure}{.24\linewidth}
        \centering
        \includegraphics[width=\linewidth]{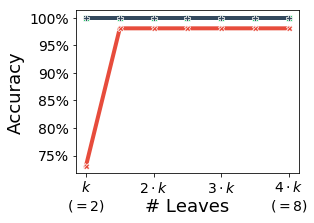}
        \caption{\scriptsize{Breast Cancer}}
        \label{fig:brease_cancer_acc}
        \end{subfigure}& 
        \begin{subfigure}{.24\linewidth}
        \centering
        \includegraphics[width=\linewidth]{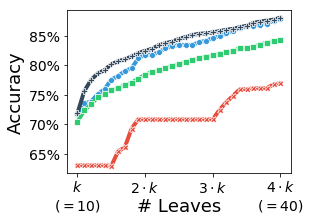}
        \caption{\scriptsize{Digits}}
        \label{fig:digits_acc}
        \end{subfigure}&
        \begin{subfigure}{.24\linewidth}
        \centering
        \includegraphics[width=\linewidth]{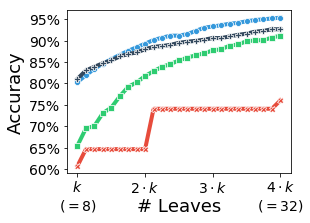}
        \caption{\scriptsize{Mice Protein}}
        \label{fig:mice_protein_acc}
        \end{subfigure}&
        \begin{subfigure}{.24\linewidth}
        \centering
        \includegraphics[width=\linewidth]{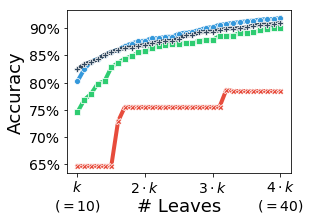}
        \caption{\scriptsize{Anuran}}
        \vspace{1ex}
        \label{fig:anurn_acc}
        \end{subfigure}\\
        \rowcolor{gray!15}\multicolumn{4}{l}{\textbf{Larger Datasets \scriptsize (Clustering accuracy)}}\\
        \begin{subfigure}{.24\linewidth}
        \centering
        \includegraphics[width=\linewidth]{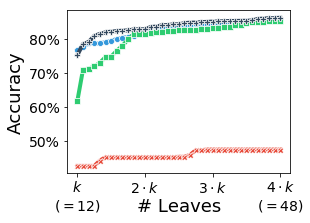}
        \caption{\scriptsize{Avila}}
        \label{fig:avila_acc}
        \end{subfigure}&
        \begin{subfigure}{.24\linewidth}
        \centering
        \includegraphics[width=\linewidth]{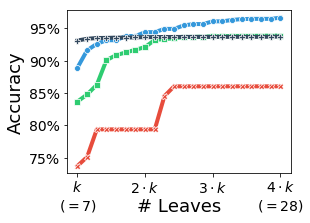}
        \caption{\scriptsize{Covtype}}
        \label{fig:covtype_acc}
        \end{subfigure}&
        \begin{subfigure}{.24\linewidth}
        \centering
        \includegraphics[width=\linewidth]{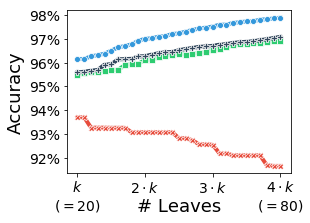}
        \caption{\scriptsize{20newsgroups}}
        \label{fig:20newsgroups_acc}
        \end{subfigure}&
        \begin{subfigure}{.24\linewidth}
        \centering
        \includegraphics[width=\linewidth]{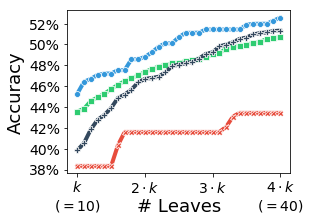}
        \caption{\scriptsize{CIFAR-10}}
        \vspace{1ex}
        \label{fig:cifar10_acc}
        \end{subfigure}\\
        \rowcolor{gray!15}\multicolumn{4}{l}{\textbf{Synthetic Datasets \scriptsize (Clustering accuracy)}}\\
        &
        \begin{subfigure}{.24\linewidth}
        \centering
        \includegraphics[width=\linewidth]{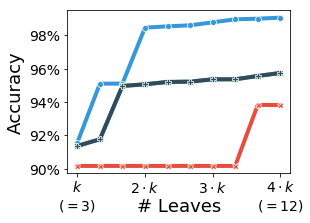}
        \caption{\scriptsize{Synthetic I}}
        \label{fig:syntactic1_acc}
        \end{subfigure}&
        \begin{subfigure}{.24\linewidth}
        \centering
        \includegraphics[width=\linewidth]{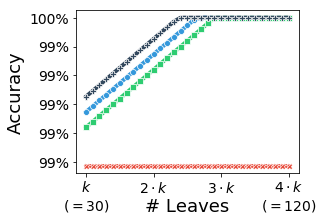}
        \caption{\scriptsize{Synthetic II}}
        \label{fig:syntactic2_acc}
        \end{subfigure}&\\
    \end{tabular}
    \caption{
        Results of clustering accuracy experiments over 12 different datasets. Graphs depict the accuracy of the tree-based clustering with respect to the reference $k$-means ($y$-axis, best result is $100\%$) as function of the number of tree leaves ($x$-axis).
    }
    \label{fig:experiments_accuracy}
\end{figure}

\end{document}